\newcolumntype{b}{>{\columncolor{green!20}}c}
\newcommand{\mynote}[3]{
		\fbox{\bfseries\sffamily\scriptsize#1}
		{\small$\blacktriangleright$\textsf{\emph{\color{#3}{#2}}}$\blacktriangleleft$}}
	\newcommand{\zzz}[1]{{\setlength{\fboxsep}{2pt}\fcolorbox{black}{yellow}{\textsf{\emph{#1}}}}\xspace}}
\newcommand{\mynote}[3]{}
	\newcommand{\zzz}[1]{}}
\newcommand{\rp}[1]{\mynote{Rafael}{#1}{orange}}
\newcommand{\TODO}[1]{\zzz{TODO: #1}}
\newcommand{\newtext}[1]{#1}
\newcommand{\sys}{\newalgorithm{}\xspace}
\newcommand{\femnist}{FEMNIST\xspace}
\newcommand{\cifar}{CIFAR-10\xspace}
\newcommand{\sgd}{{\xspace}\ac{SGD}\xspace}
\newcommand{\dpsgd}{{\xspace}\ac{D-PSGD}\xspace}
\newcommand{\iid}{\ac{IID}\xspace}
\newcommand{\elo}{EL-Oracle\xspace}
\newcommand{\ello}{EL-Local\xspace}
\newtheorem{assumption}{Assumption}
\newtheorem{remark}{Remark}
\newcommand{\samplenum}{s}
\newcommand{\expect}[1]{\mathop{{}\mathbb{E}}\left[{#1}\right]}
\newcommand{\condexpect}[2]{\mathbb{E}_{#1}\left[{#2}\right]}
\providecommand{\iprod}[2]{\ensuremath{\left\langle #1,\,#2  \right\rangle}}
\newcommand{\card}[1]{\left\lvert{#1}\right\rvert}
\newcommand{\norm}[1]{\left\lVert{#1}\right\rVert}
\newcommand{\indexvar}[3]{{#3}^{\ifthenelse{\equal{#1}{}}{}{\left({#1}\right)}}_{#2}}
\newcommand{\loss}{ F}
\newcommand{\modelp}[2]{\indexvar{#1}{#2}{y}}
\newcommand{\lossperpoint}{f}
\newcommand{\heterparam}{\mathcal{H}}
\newcommand{\receivedsubset}[2]{\indexvar{#1}{#2}{\mathcal{S}}}
\newcommand{\gradient}[2]{\indexvar{#1}{#2}{g}}
\newcommand{\model}[2]{\indexvar{#1}{#2}{x}}
\newcommand{\avgmodel}[1]{\bar{x}_{#1}}
\newcommand{\localstep}{\gamma}
\newcommand{\newalgorithm}{EL}
\newcommand{\contractionp}[1]{\alpha_{#1}}
\newcommand{\contraction}[1]{\beta_{#1}}
\newcommand{\contractionb}[1]{\eta_{#1}}
\newcommand{\localloss}[1]{\indexvar{#1}{}{f}}
\newcommand{\dist}[1]{\indexvar{#1}{}{\mathcal{D}}}
\newcommand{\datapoint}[2]{\indexvar{#1}{#2}{\xi}}
\renewcommand{\paragraph}[1]{\textbf{#1}~}
\newtheorem{theorem}{Theorem}
\newtheorem{lemma}{Lemma}
\renewcommand{\paragraph}[1]{\textbf{#1}~}
\def\R{\mathbb{R}}
\newcommand{\avggrad}[1]{\indexvar{}{#1}{\overline{g}}}
\newcommand{\AvgGrad}[1]{\overline{\nabla F}_{#1}}
\newcommand{\localgrad}[1]{\nabla{} \indexvar{#1}{}{\localloss{}}}
\newcommand{\dataspace}{\mathcal{Z}}
\acrodef{DL}{decentralized learning}
\acrodef{ML}{machine learning}
\acrodef{D-PSGD}{decentralized parallel stochastic gradient descent}
\acrodef{FL}{federated learning}
\acrodef{FI}{federated inference}
\acrodef{SGD}{stochastic gradient descent}
\acrodef{IID}{independent and identically distributed}
\acrodef{non-IID}{non independent and identically distributed}
\acrodef{RMSE}{root mean square error}
\acrodef{RMW}{random model walk}
\acrodef{GL}{gossip learning}
\acrodef{DWT}{discrete wavelet transform}
\acrodef{LAN}{local area network}
\acrodef{WAN}{wide area network}
\acrodef{NN}{neural network}
\acrodef{KD}{knowledge distillation}
\acrodef{EL}{Epidemic Learning}
\acrodef{EL-Oracle}{\Ac{EL}-Oracle}
\acrodef{EL-Local}{\Ac{EL}-Local}
\pgfplotsset{compat=newest}
\tikzset{external/mode=list and make}
\edef\x{\endgroup\def\noexpand\homepath{%
        \@@input|"kpsewhich --var-value=HOME" }}\x
\def\overleafhome{/tmp}
\newcommand{\inputplot}[2]{%
	\ifx\homepath\overleafhome%
	\IfBeginWith{#1}{plots}{\includegraphics{_main-figure#2.pdf}}{#1}%
	\else%
	{\sffamily\scriptsize\input{#1}}
\fi}
\newcommand{\newgroupwidth}[2]%
{\expandafter\xdef\csname groupwidth#1\endcsname{#2}}
\newcounter{groupwidth}
\newsavebox{\groupwidthbox}
\edef\groupnumber{#1}%
\let\expandafter\mywidth\csname groupwidth\thegroupwidth\endcsname}%
		\tikzset{/pgfplots/width={\mywidth}}%
	\pgfmathsetlengthmacro{\mywidth}{\mywidth + (\linewidth - \wd\groupwidthbox)/\groupnumber}
\write\@auxout{\string\newgroupwidth{\thegroupwidth}{\mywidth}}}
\title{Epidemic Learning: Boosting Decentralized Learning with Randomized Communication}
\author{%
\makecell{
Martijn de Vos \thanks{Authors are listed in alphabetical order.}  \hspace{20pt}
Sadegh Farhadkhani \thanks{Corresponding author \textless sadegh.farhadkhani@epfl.ch\textgreater.}   \hspace{20pt}
Rachid Guerraoui \\
Anne-Marie Kermarrec \hspace{20pt}
Rafael Pires \hspace{20pt} 
Rishi Sharma }
\AND
\vspace{-0.5cm}\\
EPFL, Switzerland
}
\begin{document}

\maketitle

\begin{abstract}
We present \Ac{EL}, a simple yet powerful \ac{DL} algorithm that leverages changing communication topologies to achieve faster model convergence compared to conventional \ac{DL} approaches.
At each round of \ac{EL}, each node sends its model updates to a \emph{random sample} of $s$ other nodes (in a system of $n$ nodes).
We provide an extensive theoretical analysis of \ac{EL}, demonstrating that its changing topology culminates in superior 
convergence properties compared to the state-of-the-art (static and dynamic) topologies.
Considering smooth non-convex loss functions, the 
 number of  transient iterations for \ac{EL}, 
\ie, the rounds required to achieve asymptotic linear speedup, is in $\mathcal{O}(\nicefrac{n^3}{\samplenum^2})$ which outperforms the best-known bound $\mathcal{O}({n^3})$ by a factor of $ s^2 $, indicating the benefit of randomized communication for \ac{DL}.
We empirically evaluate \ac{EL}
in a 96-node network and compare its performance with state-of-the-art \ac{DL} approaches.
Our results illustrate that \ac{EL} converges up to $ 1.7\times $ quicker than baseline \ac{DL} algorithms and attains 2.2\% higher accuracy for the same communication volume.
\end{abstract}

\section{Introduction}

In Decentralized Learning (DL), multiple machines (or nodes) collaboratively train a machine learning model without any central server~\cite{lian2017can,lu2021optimal,nedic2020distributed}.
Periodically, each node updates the model using its local data, sends its model updates to other nodes, and averages the received model updates, all without sharing raw data.
Compared to centralized approaches~\cite{li2013parameter}, \ac{DL} circumvents the need for centralized control, ensures scalability~\cite{kairouz2021advances,lian2017can}, and avoids imposing substantial communication costs on a central server~\cite{ying2021bluefog}.
However, \ac{DL} comes with its own challenges. The exchange of model updates with all nodes can become prohibitively expensive in terms of communication costs as the network size grows~\cite{kong2021consensus}.
For this reason, nodes in \ac{DL} algorithms usually exchange model updates with only a small number of other nodes in a particular round, \ie they perform partial averaging instead of an All-Reduce (network-wide) averaging of local model updates~\cite{ying2021exponential}.

A key element of \ac{DL} algorithms is the communication topology, governing how model updates are exchanged between nodes.
The properties of the communication topology are critical for the performance of \ac{DL} approaches as it directly influences the speed of convergence~\cite{le2023refined,vogels2023beyond,wang2019matcha, scaman2018optimal}.
The seminal \dpsgd algorithm and many of its proposed variants rely on a static topology, \ie each node exchanges its model with a set of neighboring nodes that remain fixed throughout the training process~\cite{lian2017can}.
More recent approaches study changing topologies, \ie topologies that change during training, with notable examples being time-varying graphs~\cite{koloskova2020unified,lu2020decentralized,rogozin2022decentralized}, one-peer exponential graphs~\cite{ying2021exponential}, EquiTopo~\cite{song2022communicationefficient}, and Gossip Learning~\cite{hegedHus2019gossip,hegedHus2021decentralized,jelasity2007gossip}. We discuss these works in more detail in Section~\ref{sec:related_work}.

This paper investigates the benefits of \emph{randomized communication} for \ac{DL}.
Randomized communication, extensively studied in distributed computing, has been proven to enhance the performance of fundamental algorithms, including consensus and data dissemination  protocols~\cite{kempe2003gossip,boyd2006randomized,cason2021gossip}.
In the case of \ac{DL}, randomized communication can reduce the convergence time and therefore communication overhead~\cite{dhasade:2023:dcpy}. 
In this work, we specifically consider the setting where each node communicates with a random subset of other nodes that changes at each round, as with \emph{epidemic} interaction schemes~\cite{eugster2004epidemic,montresor2017gossip}.
We also focus on the scenario where data is unevenly distributed amongst nodes, \ie \ac{non-IID} settings, a common occurrence in  \ac{DL}~\cite{hsieh2020non}.

\begin{wrapfigure}{R}{0.4\textwidth}
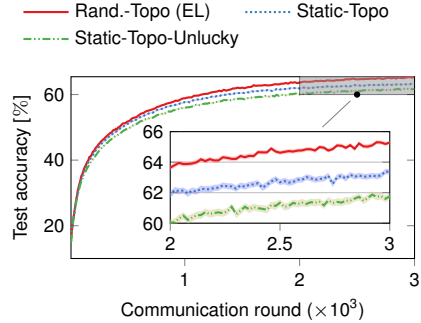

	\centering
	\inputplot{plots/badstatic}{0}
	\caption{Randomized topologies can converge quicker than static ones.}
	\label{fig:motivational_fig}
\end{wrapfigure}

To illustrate the potential of randomized communication, we empirically compare model convergence in Figure~\ref{fig:motivational_fig} for static $s$-regular topologies (referred to as \texttt{Static-Topo}) and randomized topologies (referred to as \texttt{Rand.-Topo} (\Ac{EL}), our work) in 96-node networks on the \cifar learning task.
As particular static topologies can lead to sub-optimal convergence, we also experiment with the setting in which nodes are stuck in an \emph{unlucky} static $s$-regular topology (referred to as \texttt{Static-Topo-Unlucky}).
This unlucky topology consists of two network partitions, connected by just two edges.
Figure~\ref{fig:motivational_fig} reveals that dynamic topologies converge quicker than static topologies.
After \num{3000} communication rounds, \texttt{Rand.-Topo} (EL, our work) achieves 65.3\% top-1 test accuracy, compared to 63.4\% and 61.8\% for Static-Random and Static-Unlucky, respectively.
Additional experiments can be found in Section~\ref{sec:eval}, and their setup is elaborated in~\cref{app:exp}.

\paragraph{Our contributions} 
Our paper makes the following four contributions:
\begin{itemize}[leftmargin=*]
\item We formulate, design, analyze, and experimentally evaluate \textbf{Epidemic Learning (EL)}, a novel \ac{DL} algorithm in which nodes collaboratively train a machine learning model using a dynamically changing, randomized communication topology.
More specifically, in \ac{EL}, at each round, each node sends its model update to a \emph{random sample} of $s$ other nodes (out of $ n $ total nodes).
This process results in a randomized topology that changes every round.

\item %
We first analyze an \ac{EL} variant, named \textbf{\elo}, where the union of the random samples by all nodes forms an $s$-regular random graph in each round.
\elo ensures a perfectly balanced communication load among the participating nodes as each node sends and receives exactly $ s $ model updates every round.
Nevertheless, achieving an $s$-regular graph necessitates coordination among the nodes, which is not ideal in a decentralized setting.  %
To address this challenge, we also analyze another \ac{EL} variant, named \textbf{\ello}. In \ello, each node independently and locally draws a uniformly random sample of $\samplenum$ other nodes at each round and sends its model update to these nodes.
We demonstrate that \ello enjoys a comparable convergence guarantee as \elo without requiring any coordination among the nodes and in a fully decentralized manner. 

\item Our theoretical analysis in Section~\ref{sec:theory} shows that \sys surpasses the best-known static and randomized topologies in terms of convergence speed.
More precisely, we prove that \sys converges with the rate 
$\mathcal{O} \left(\nicefrac{1}{\sqrt{{nT}}} + \nicefrac{1}{\sqrt[3]{{\samplenum T^2}}} + \nicefrac{1}{T}\right),$
where 
$T$ is the number of learning rounds.
Similar to most state-of-the-art algorithms for decentralized optimization~\cite{li2019communication,koloskova2020unified} and centralized \sgd (\eg, with a parameter server)~\cite{li2014scaling}, our rate asymptotically achieves \textbf{linear speedup}, \ie, when $T$ is sufficiently large, the first term in the convergence rate $\mathcal{O}(\nicefrac{1}{\sqrt{nT}})$ becomes dominant and improves with respect to the number of nodes. 

Even though linear speedup is a very desirable property, %
\ac{DL} algorithms often require many more rounds to reach linear speedup compared to centralized \sgd due to the additional error (the second term in the above convergence rate) arising from partial averaging of the local updates. To capture this phenomenon and to compare different decentralized learning algorithms, previous works~\cite{chen2021accelerating,pu2021sharp,song2022communicationefficient,ying2021exponential} adopt the concept of \textbf{transient iterations} which are the number of rounds before a decentralized algorithm reaches its linear speedup stage, \ie when $T$ is relatively small such that the second term of the convergence rate dominates the first term.

We derive that \sys requires $\mathcal{O}(\nicefrac{n^3}{\samplenum^2})$ transient iterations, which improves upon the best known bound by a factor of $\samplenum^2$. We also show this result in Table~\ref{tab:tran2}. We note that while \sys matches the state-of-the-art bounds when $\samplenum \in \mathcal{O}(1)$, it offers additional flexibility over other methods through parameter $\samplenum$ that provably improves the theoretical convergence speed depending on the communication capabilities of the nodes.
For instance, when $\samplenum \in \mathcal{O}(\log n)$ as in Erdős–Rényi and EquiStatic topologies, the number of transient iterations for \sys reduces to $\mathcal{O}(\nicefrac{n^3}{\log^2n})$, outperforming other methods.
This improvement comes from the fact that the second term in our convergence rate is superior to the corresponding term $\mathcal{O}(\nicefrac{1}{\sqrt[3]{p^2T^2}})$ in the rate of \dpsgd, 
 where $p \in (0, 1]$ is the  spectral gap of the mixing matrix.
We expound more on this in~\Cref{sec:theory}.

\item We present in~\Cref{sec:eval} our experimental findings. \newtext{Using two standard image classification datasets,} we compare 
\elo and \ello against static regular graphs and the state-of-the-art EquiTopo topologies. We find that \elo and \ello converge faster than the baselines and save up to 1.7$\times$ communication volume to reach the highest accuracy of the most competitive baseline.

\end{itemize}

\renewcommand{\arraystretch}{1.4}
\begin{table}[!tb]
    \small
  \centering
  \caption{Comparison of \ac{EL} with state-of-the-art \ac{DL} approaches (grouped by topology family). We compare \elo and \ello to ring, torus, Erdős–Rényi, exponential and EquiTopo topologies.}
  \begin{threeparttable}
  \begin{tabular}{c||cccc}
    \textbf{Method} & {\bf Per-Iter Out Msgs.} & {\bf Transient Iterations}  & {\bf Topology} & {\bf Communication} \\
    \hline
    \hline
     {\bf Ring \cite{koloskova2019decentralized}}& 2 & $\mathcal{O}(n^{11})$ & static & undirected\\
    \hline
    {\bf Torus \cite{koloskova2019decentralized}} &  4 & $\mathcal{O}(n^7)$ & static & undirected  \\
    \hline
    {\bf E.-R. Rand \cite{nedic2018network}} &  $\mathcal{O}(\log n)$ & $\Tilde{\mathcal{O}}(n^3)$ & static & undirected  \\
    \hline
    {\bf Static Exp. \cite{ying2021exponential}}  & $ \log n $ & $\mathcal{O}(n^3 \log^4 n)$ & static & directed \\
    {\bf One-Peer Exp. \cite{ying2021exponential}}  & 1 & $\mathcal{O}(n^3 \log^4 n)$ & semi-dynamic\tnote{1} & directed \\
    \hline
    {\bf D-EquiStatic \cite{song2022communicationefficient}}  &  $ \log n$ & $\mathcal{O}(n^3)$ & static & directed  \\
    {\bf U-EquiStatic \cite{song2022communicationefficient}}  &  $ \log n$ & $\mathcal{O}(n^3)$ & static & undirected  \\
    {\bf OD-EquiDyn \cite{song2022communicationefficient}}  & 1 & $\mathcal{O}(n^3)$ & semi-dynamic\tnote{1} & directed  \\
    {\bf OU-EquiDyn \cite{song2022communicationefficient}}  & 1 & $\mathcal{O}(n^3)$ & semi-dynamic\tnote{1} & undirected  \\
    \hline
    {\cellcolor{olive!20} \bf \elo (ours)}  & \cellcolor{olive!20}  $\samplenum$  & \cellcolor{olive!20} $\mathcal{O}(\nicefrac{n^3}{\samplenum^2})$  & \cellcolor{olive!20} rand.-dynamic\tnote{2} & \cellcolor{olive!20}  undirected \\
    {\cellcolor{olive!20} \bf \ello (ours)}  & \cellcolor{olive!20}  $\samplenum$ & \cellcolor{olive!20} $\mathcal{O}(\nicefrac{n^3}{\samplenum^2})$  & \cellcolor{olive!20} rand.-dynamic\tnote{2} & \cellcolor{olive!20} directed  \\
  \end{tabular}%
  \begin{tablenotes}
    \item[1] Semi-dynamic topologies remain fixed throughout the learning process but nodes select subsets of adjacent (neighboring) nodes each round to communicate with. %
    \item[2] In a randomized-dynamic topology, the topology is replaced each round.
    \end{tablenotes}
  \end{threeparttable}
  \label{tab:tran2}
  \vspace{-4mm}
\end{table}

\section{Epidemic Learning}
\label{sec:epidemic_learning}
In this section, we first formally define the decentralized optimization problem. Then we outline our \sys algorithm and its variants in Section~\ref{sec:alg}.

\subsection{Problem statement}
We consider a system of $n$ nodes $[n]:=\{1,\ldots,n\}$ where the nodes can communicate by sending messages.
\newtext{Similar to existing work in this domain~\cite{song2022communicationefficient}, we consider settings in which a node can communicate with all other nodes.
The implications of this assumption are further discussed in~\Cref{sec:network_connectivity}.}
Consider a  data space $\dataspace$ and a loss function $\lossperpoint: \R^d \times \dataspace \to \R$. Given a parameter $\model{}{}\in \R^d$, a data point $\datapoint{}{} \in \dataspace$ incurs a loss of value $\lossperpoint(\model{}{}, \, \datapoint{}{})$. Each node $i \in [n]$ has a data distribution $\dist{i}$ over $\dataspace$, which may differ from the data distributions of other nodes.
We define the local loss function of $i$ over distribution $\dist{i}$ as ${\localloss{i}(\model{}{}):= \condexpect{\datapoint{}{}\sim \dist{i} }{\lossperpoint(\model{}{},\datapoint{}{})}}$.
The goal is to collaboratively minimize the {\em global average loss} by solving the following optimization problem:

 \begin{equation}
 \label{eq:optprblm}
    \min_{\model{}{}\in\R^d}\left[\loss(\model{}{}) := \frac{1}{n} \sum_{i \in [n]}{\localloss{i}(\model{}{})}\right].
\end{equation}

\begin{algorithm}[b!]
\small
\begin{algorithmic}[1]
\caption{Epidemic Learning as executed by a node $i$}
\label{algo}
\State \textbf{Require}: Initial model $\model{i}{0} =\model{}{0} \in \R^d$, number of rounds $T$, step-size $\localstep{},$ sample size $\samplenum$.
\For{$t=0,\dots, \, T-1$} \Comment{Line 3-5: Local training phase}
\State Randomly sample a data point $\datapoint{i}{t} $ from the local data distribution $\dist{i}$
\State Compute the stochastic gradient $\gradient{i}{t}:=\nabla\lossperpoint(\model{i}{t},\datapoint{i}{t})$
\State Partially update local model $\model{i}{t + 1/2} := \model{i}{t} - \localstep{} \, \gradient{i}{t}$ \Comment{Line 6-9: Random communication phase}
\State Sample $\samplenum$ other nodes from $[n] \setminus \{i\}$ using \elo or \ello 
\State Send $\model{i}{t + \nicefrac{1}{2}}$ to the selected nodes 
\State Wait for the set of updated models $\receivedsubset{i}{t}$ \Comment{$\receivedsubset{i}{t}$ is the set of received models by node $ i $ in round $ t $}
\State Update $\model{i}{t + 1}$ to the average of available updated models according to~(\ref{eq:aggregation})
\EndFor
\end{algorithmic}
\end{algorithm}

\subsection{Description of \sys}
\label{sec:alg}
We outline  \sys, executed by node $ i $, in Algorithm~\ref{algo}.
We define the initial model of node $ i $ as $\model{i}{0} $ and a step-size $\localstep{}$ used during the local model update.
The \sys algorithm runs for $ T $ rounds. Each round consists of two phases: a \emph{local update phase} (line 3-5) in which the local model is updated using the local dataset of node $ i $, and a \emph{random communication phase} (line 6-9) in which model updates are sent to other nodes chosen randomly.
In the local update phase, node $ i $ samples a data point $\datapoint{i}{t} $ from its local data distribution $\dist{i}$ (line 3), computes the stochastic gradient $\gradient{i}{t} $ (line 4) and partially updates its local model $\model{i}{t + 1/2} $ using step-size $ \gamma $ and gradient $\gradient{i}{t} $ (line 5).

The random communication phase follows, where node $i$ first selects $\samplenum$ of other nodes from the set of all nodes excluding itself: $[n] \setminus \{i\}$ (line 6).
This sampling step is the innovative element of \sys, and we present two variants later.
It then sends its recently updated local model $\model{i}{t + \nicefrac{1}{2}}$ to the selected nodes and waits for model updates from other nodes. Subsequently, each node $i$ updates its model based on the models it receives from other nodes according to~\Cref{eq:aggregation}. The set of nodes that send their models to node $i$ is denoted by $\receivedsubset{i}{t}$. The new model for node $i$ is computed as a weighted average of the models received from the other nodes and the local model of node $i$, where the weights are inversely proportional to the number of models received plus one.

\begin{equation}
\label{eq:aggregation}
    \model{i}{t+1} :=  \frac{1}{\card{\receivedsubset{i}{t}} + 1} \left(\model{i}{t + \nicefrac{1}{2} } + \sum_{j \in \receivedsubset{i}{t}} \model{j}{t + \nicefrac{1}{2}} \right).
\end{equation}

We now describe two approaches to sample $ s $ other nodes (line 6), namely \elo and \ello:

\begin{figure}
     \centering
     \includegraphics[width=\textwidth]{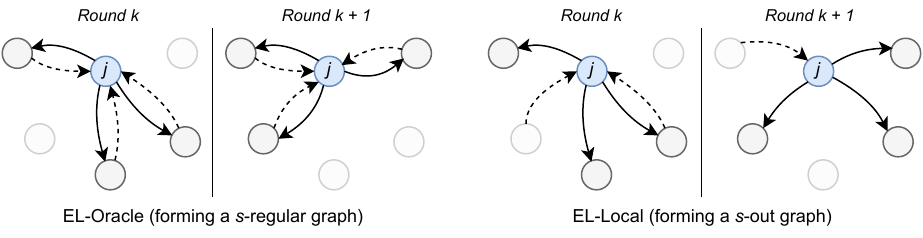}
    \caption{\elo (left) and \ello (right), from the perspective of node $ j $, with $ s = 3 $ and for two rounds. We show both outgoing model updates (solid line) and incoming ones (dashed line).}
    \label{fig:el}
    \vspace{-0.3cm}
\end{figure}

\paragraph{\elo} With \elo, the union of selected communication links forms a $s$-regular topology in which every pair of nodes has an equal probability of being neighbors.
Moreover, if node $ i $ samples node $ j $, $ j $ will also sample $ i $ (communication is undirected).
Figure~\ref{fig:el} (left) depicts \elo sampling from the perspective of node $j$ in two consecutive iterations.
One possible way to generate such a dynamic graph is by generating an $s$-regular structure and then distributing a random permutation of nodes at each round.
Our implementation (see Section~\ref{sec:eval}) uses a central coordinator to randomize and synchronize the communication topology each round.

\paragraph{\ello} Constructing the $ s$-regular topology in \elo every round can be challenging in a fully decentralized manner as it requires coordination amongst nodes to ensure all nodes have precisely $ s $ incoming and outgoing edges.
This motivates us to introduce \ello, a sampling approach where each node $ i $ locally and independently samples $\samplenum$ other nodes and  sends its model update to them, without these $ s $ nodes necessarily sending their model back to $ i $. The union of selected nodes now forms a $s$-out topology.
Figure~\ref{fig:el} (right) depicts \ello sampling from the perspective of node $j$ in two consecutive iterations.
In practice, \ello can be realized either by exchanging peer information before starting the learning process or using a decentralized peer-sampling service that provides nodes with (partial) views on the network~\cite{jelasity2007gossip,nedelec2018adaptive,voulgaris2005cyclon}.
\newtext{While both the topology construction in \elo as well as peer sampling in \ello add some communication and computation overhead, this overhead is minimal compared to the resources used for model exchange and training.}

\begin{wrapfigure}{R}{0.4\textwidth}
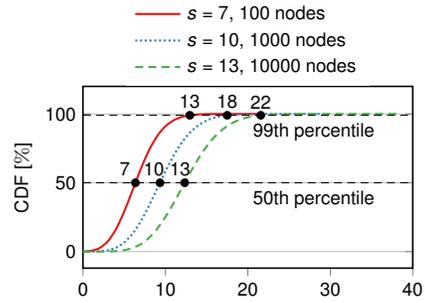

	\centering
	\inputplot{plots/distribution}{1}
	\caption{The distribution of incoming models, for different values of $ n $ and $ s $.}
	\label{fig:indegree}
\end{wrapfigure}

Even though each node sends $\samplenum$ messages for both \elo and \ello, in \ello, different nodes may receive different numbers of messages in each training round as each node selects the set of its out-neighbors locally and independent from other nodes.
While this might cause an imbalance in the load on individual nodes, we argue that this does not pose a significant issue in practice.
To motivate this statement, we run an experiment with $ n = $ \num{100}, \num{1000} and \num{10000} and for each value of $ n $ set $ s = \lceil log_2(n) \rceil $.
We simulate up to \num{5000} rounds for each configuration.
We show in Figure~\ref{fig:indegree} a CDF with the number of incoming models each round and observe that the distribution of the number of models received by each node is very light-tailed.
In a \num{10000} node network (and $s = $ \num{13}), nodes receive less than \num{22} models in \num{99}\% of all rounds.
As such, it is improbable that a node receives a disproportionally large number of models in a given round.
In practice, we can alleviate this imbalance issue by adopting a threshold value $k$ on the number of models processed by each node in a particular round and ignoring incoming models after having received $ k $ models already.
\newtext{The sender node can then retry model exchange with another random node that is less occupied.}

\section{Theoretical Analysis}
\label{sec:theory}

In this section, we first present our main theoretical result demonstrating the finite-time convergence of \sys. We then compare our result with the convergence rate of \dpsgd on different topologies.

\subsection{Convergence of \newalgorithm{}}

In our analysis, we consider the class of smooth loss functions, and we assume that the variance of the noise of stochastic gradients is bounded.
We use of the following assumptions that are classical to the analysis of stochastic first-order methods and hold for many learning problems~\cite{bottou2018optimization,ghadimi2013stochastic}.
\begin{assumption}[Smoothness]
\label{ass:smoothness}
    For all $i \in [n]$, the function $\localloss{i}:\R^d \rightarrow \R$ is differentiable and there exists $L < \infty$, such that for all $\model{}{}, \modelp{}{} \in \R^d$, 
    \begin{equation*}
       \norm{\nabla\localloss{i}(\modelp{}{})- \nabla\localloss{i}(\model{}{})}\leq L \norm{\modelp{}{}-\model{}{}}.
    \end{equation*}
\end{assumption}

\begin{assumption} [Bounded stochastic noise] 
There exists $\sigma < \infty$ such that for all $i \in [n]$, and $\model{}{}\in \R^d$,
\label{ass:stochastic_noise}
    \begin{align*}
        \condexpect{\datapoint{}{} \sim \dist{i}}{\norm{\nabla\lossperpoint(\model{}{},\datapoint{}{})-{\localloss{i}}(\model{}{})}^2} \leq \sigma^2.
    \end{align*}
\end{assumption}
Moreover, we assume that the heterogeneity among the local loss functions measured by the
average distance between the local gradients  is bounded.
\begin{assumption}[Bounded heterogeneity]
    \label{ass:bounded_heterogeneity}
    There exists $\heterparam < \infty$, such that for all $\model{}{}\in \R^d$,
    \begin{equation*}
        \frac{1}{n} \sum_{i \in [n]} {\norm{\nabla\localloss{i}(\model{}{})-\nabla\loss(\model{}{})}^2} \leq \heterparam^2.
    \end{equation*}
\end{assumption}
 We note that this assumption is standard in {\em heterogeneous} (a.k.a. non-i.i.d) settings, \ie when nodes have different data distributions~\cite{lian2017can,ying2021exponential}. In particular, $\heterparam$ can be bounded based on the closeness of the underlying local data distributions~\cite{fallah2020personalized}. We now present our main theorem.

\noindent \fcolorbox{black}{white}{
\parbox{0.97\textwidth}{\centering
\begin{theorem}
\label{th:main}
Consider Algorithm~\ref{algo}. 
Suppose that assumptions~\ref{ass:smoothness},~\ref{ass:stochastic_noise} and~\ref{ass:bounded_heterogeneity} hold true.  Let $\Delta_0$ be a real value such that $\loss(\model{}{0}) - \min_{\model{}{} \in \R^d} \loss(\model{}{}) \leq \Delta_0$.
Then, for any $T\geq 1$, $n\geq 2$, and $\samplenum\geq 1$:

\textbf{a)} For \textbf{\elo,} setting
{\small
$$\localstep \in \Theta\left(\min\left\{\sqrt{\frac{n\Delta_0}{TL\sigma^2}},\sqrt[3]{\frac{\Delta_0}{T \contractionp{\samplenum}L^2 \left( \sigma^2 + \heterparam^2\right)}},\frac{1}{L}\right\} \right),$$}
we have
\small
\begin{align*}
    \frac{1}{n}\sum_{i \in [n]}\frac{1}{T}\sum_{t = 0}^{T-1}\expect{ \norm{\nabla \loss \left( \model{i}{t}\right)}^2} 
     &\in \mathcal{O} \left(\sqrt{\frac{L\Delta_0\sigma^2}{nT}} + \sqrt[3]{\frac{\contractionp{\samplenum} L^2\Delta_0^2 \left( \sigma^2 + \heterparam^2\right)}{T^2}} + \frac{L\Delta_0}{T}\right),
\end{align*}
\normalsize
where 
\small
    \begin{align*}
        \contractionp{\samplenum} := \frac{1}{\samplenum + 1}\left( 1 - \frac{s}{n-1}\right) \in \mathcal{O}(\frac{1}{\samplenum}).
    \end{align*}
\normalsize
\textbf{b)} For \textbf{\ello}, setting 
{\small
$$\localstep \in \Theta\left(  \min\left\{\sqrt{\frac{n\Delta_0}{T \left(\sigma^2+ \contraction{\samplenum}\heterparam^2\right)L}},\sqrt[3]{\frac{\Delta_0}{ T\contraction{\samplenum} L^2\left( \sigma^2 + \heterparam^2\right)}},\frac{1}{L}\right\}\right),$$}
we have
\small
\begin{align*}
    \frac{1}{n}\sum_{i \in [n]}\frac{1}{T}\sum_{t = 0}^{T-1}\expect{ \norm{\nabla \loss \left( \model{i}{t}\right)}^2} 
     &\in \mathcal{O} \left(\sqrt{\frac{L\Delta_0(\sigma^2+ \contraction{\samplenum}\heterparam^2)}{nT}} + \sqrt[3]{\frac{\contraction{\samplenum} L^2\Delta_0^2 \left( \sigma^2 + \heterparam^2\right)}{T^2}} + \frac{L\Delta_0}{T}\right),
\end{align*}
\normalsize
where  \small
\begin{align*}
       \contraction{\samplenum} :=         
        \frac{1}{\samplenum}\left(1-\left(1-\frac{\samplenum}{n-1}\right)^n\right)-\frac{1}{n-1} \in \mathcal{O}(\frac{1}{\samplenum}).
    \end{align*}

\normalsize
\end{theorem}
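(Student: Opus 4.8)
The plan is to recast one round of Algorithm~\ref{algo} as a local gradient step followed by multiplication by a random row-stochastic \emph{mixing matrix} $W^t$, where $W^t_{ij}=\nicefrac{1}{\card{\receivedsubset{i}{t}}+1}$ for $j\in\receivedsubset{i}{t}\cup\{i\}$ and $0$ otherwise, so that the aggregation~\eqref{eq:aggregation} reads $\model{i}{t+1}=\sum_j W^t_{ij}\model{j}{t+\nicefrac{1}{2}}$. For \elo the realized $W^t$ is doubly stochastic (the $\samplenum$-regular undirected topology forces every row and column sum to equal one), so the average iterate $\avgmodel{t}:=\frac{1}{n}\sum_i\model{i}{t}$ obeys $\avgmodel{t+1}=\avgmodel{t}-\nicefrac{\localstep}{n}\sum_i\gradient{i}{t}$ exactly. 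For \ello the realized $W^t$ is only row-stochastic, but by the symmetry of independent uniform sampling $\expect{W^t}$ is doubly stochastic, so the average is preserved only in expectation; controlling the resulting mean drift is what will ultimately produce the extra $\contraction{\samplenum}\heterparam^2$ term in part \textbf{b)}. Throughout I track two quantities: the objective gap $\expect{\loss(\avgmodel{t})}-\min_{\model{}{}\in\R^d}\loss(\model{}{})$ and the consensus error $\Xi_t:=\frac{1}{n}\sum_i\expect{\norm{\model{i}{t}-\avgmodel{t}}^2}$.

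The technical heart is a \emph{one-step contraction lemma}: conditioned on $\model{\cdot}{t+\nicefrac{1}{2}}$, averaging over the random sample contracts the consensus distance, i.e. $\frac{1}{n}\sum_i\expect{\norm{\model{i}{t+1}-\avgmodel{t+1}}^2}\le\contractionp{\samplenum}\cdot\frac{1}{n}\sum_i\norm{\model{i}{t+\nicefrac{1}{2}}-\avgmodel{t+\nicefrac{1}{2}}}^2$ for \elo, and the analogue with $\contraction{\samplenum}$ for \ello. I would prove this by writing $d_j:=\model{j}{t+\nicefrac{1}{2}}-\avgmodel{t+\nicefrac{1}{2}}$ (so $\sum_j d_j=0$), expanding $\norm{\frac{1}{\card{\receivedsubset{i}{t}}+1}(d_i+\sum_{j\in\receivedsubset{i}{t}}d_j)}^2$, and taking expectation over the sampling. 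For \elo this reduces to the first- and second-order inclusion probabilities $\probability[j\in\receivedsubset{i}{t}]=\nicefrac{\samplenum}{n-1}$ and $\probability[j,k\in\receivedsubset{i}{t}]=\nicefrac{\samplenum(\samplenum-1)}{(n-1)(n-2)}$ (which hold exactly under the random-permutation construction by symmetry, not requiring $\receivedsubset{i}{t}$ to be uniform among $\samplenum$-subsets); using $\sum_{j\neq i}d_j=-d_i$ to collapse the cross terms and summing over $i$ yields the closed form $\contractionp{\samplenum}=\nicefrac{1}{\samplenum+1}\,(1-\nicefrac{\samplenum}{n-1})$. For \ello the complication is that the in-degree $\card{\receivedsubset{i}{t}}$ is random, distributed as Binomial$(n-1,\nicefrac{\samplenum}{n-1})$, so the weights $\nicefrac{1}{\card{\receivedsubset{i}{t}}+1}$ correlate with the membership of $\receivedsubset{i}{t}$; the required expectations $\expect{\nicefrac{1}{\card{\receivedsubset{i}{t}}+1}}$ have a closed form containing $(1-\nicefrac{\samplenum}{n-1})^n$, which is precisely the source of the extra term in $\contraction{\samplenum}$. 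This combinatorial moment computation — and getting the \ello in-degree expectations right — is the step I expect to be the main obstacle.

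With the contraction lemma in hand, the remaining analysis follows the standard decentralized-SGD template. First, $L$-smoothness of $\loss$ applied to the average-iterate recursion (for \elo; with an added mean-drift term for \ello) gives a descent inequality $\expect{\loss(\avgmodel{t+1})}\le\expect{\loss(\avgmodel{t})}-\frac{\localstep}{2}\expect{\norm{\nabla\loss(\avgmodel{t})}^2}+\frac{\localstep L^2}{2}\Xi_t+\frac{\localstep^2 L\sigma^2}{2n}+\cdots$, where the $\Xi_t$ term arises from replacing $\nabla\localloss{i}(\model{i}{t})$ by $\nabla\localloss{i}(\avgmodel{t})$ via smoothness, and the variance term uses Assumption~\ref{ass:stochastic_noise}. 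Second, expanding the local step $\model{i}{t+\nicefrac{1}{2}}=\model{i}{t}-\localstep\gradient{i}{t}$ and applying the contraction lemma produces a recursion $\Xi_{t+1}\le\contractionp{\samplenum}(1+c)\Xi_t+\localstep^2\contractionp{\samplenum}B$, where $B$ collects the gradient-variance bounds (involving $\sigma^2$, $\heterparam^2$ through Assumption~\ref{ass:bounded_heterogeneity}, and $\expect{\norm{\nabla\loss(\avgmodel{t})}^2}$). Since $\contractionp{\samplenum}<1$, choosing $c$ suitably makes the recursion stable and unrolling bounds $\frac{1}{T}\sum_t\Xi_t$ by a multiple of $\localstep^2\contractionp{\samplenum}B$, carrying the $\contractionp{\samplenum}$ factor into the final rate.

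Finally, I would combine the pieces: telescope the descent inequality over $t=0,\dots,T-1$, substitute the bound on $\frac{1}{T}\sum_t\Xi_t$, and use $\frac{1}{n}\sum_i\norm{\nabla\loss(\model{i}{t})}^2\le 2\norm{\nabla\loss(\avgmodel{t})}^2+2L^2\cdot\frac{1}{n}\sum_i\norm{\model{i}{t}-\avgmodel{t}}^2$ to pass from the average iterate to the per-node gradients appearing on the left-hand side of the theorem. This leaves an expression of the form $\nicefrac{\Delta_0}{\localstep T}+\localstep\sigma^2 L/n+\localstep^2\contractionp{\samplenum}L^2(\sigma^2+\heterparam^2)$; optimizing over $\localstep$ subject to the stability constraint $\localstep=\mathcal{O}(\nicefrac{1}{L})$ balances the three contributions and yields the stated $\Theta(\cdot)$ step size together with the three-term $\mathcal{O}(\cdot)$ rate. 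For \ello the only structural difference is that the mean-drift bound injects a $\contraction{\samplenum}\heterparam^2$ contribution into the $\sigma^2$ slot of the leading $\nicefrac{1}{nT}$ term, which accounts for both the modified first term and the modified step size in part \textbf{b)}.
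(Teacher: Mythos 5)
Your proposal follows essentially the same route as the paper's proof: the same one-step contraction lemma with the identical constants $\contractionp{\samplenum}$ and $\contraction{\samplenum}$ (obtained from the same inclusion-probability computation for \elo and the same binomial in-degree moment $\expect{\nicefrac{1}{A+1}}$ for \ello), the same consensus-error recursion with a $(1+c)$ slack and $\contractionb{\samplenum}<1$, the same smoothness-based descent inequality on the average iterate with a mean-drift correction term for \ello, and the same final combination via Young's inequality and step-size balancing. The only cosmetic difference is that the paper bounds the pairwise gradient drift $\frac{1}{n^2}\sum_{i,j}\expect{\norm{\gradient{i}{t}-\gradient{j}{t}}^2}$ uniformly by $15(\sigma^2+\heterparam^2)$ (the common term $\nabla\loss(\avgmodel{t})$ cancels in the difference, so the consensus recursion never carries $\expect{\norm{\nabla\loss(\avgmodel{t})}^2}$), whereas you allow that term to enter your constant $B$ and absorb it later; both resolutions work.
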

}
}

To check the tightness of this result, we consider the special case when $\samplenum = n - 1$. Then, by Theorem~\ref{th:main}, we have $\contractionp{\samplenum} = \contraction{\samplenum} = 0$, and thus both of the convergence rates become $\mathcal{O} \left(\sqrt{\nicefrac{L\Delta_0\sigma^2}{nT}} +\nicefrac{L\Delta_0}{T}\right)$, which is the same as the convergence rate of (centralized) SGD for non-convex loss functions~\cite{ghadimi2013stochastic}. 
This is expected as, in this case, every node sends its updated model to all other nodes, corresponding to all-to-all communication in a fully-connected topology and thus perfectly averaging the stochastic gradients without any drift between the local models.

The proof of Theorem~\ref{th:main} is given in Appendix~\ref{sec:conv_proof}, where we obtain a tighter convergence rate than existing methods.
It is important to note that as the mixing matrix of a regular graph is doubly stochastic, for \elo, one can use the general analysis of \dpsgd with (time-varying) doubly stochastic matrices~\cite{koloskova2020unified,li2019communication} to obtain a convergence guarantee.
However,
 the obtained rate would not be as tight and would not capture the $\mathcal{O}(\nicefrac{1}{\sqrt[3]{\samplenum}})$ improvement in the second term, which is the main advantage of randomization (see~\Cref{sec:discussion_theory}).
Additionally, it is unclear how these analyses can be generalized to the case where the mixing matrix is not doubly stochastic, which is the case for our \ello algorithm.
Furthermore, another line of work~\cite{assran2019stochastic, nedic2014distributed,nedic2016stochastic} provides convergence guarantees for decentralized optimization algorithms based on the PushSum algorithm~\cite{kempe2003gossip}, with communicating the mixing weights.
It may be possible to leverage this proof technique  
to prove the convergence of \ello. However, this approach yields sub-optimal dimension-dependent convergence guarantees (\eg, see parameter $C$ in Lemma 3 of \cite{assran2019stochastic}) and does not capture the benefit of randomized communication.
\begin{remark}
    Most prior work~\cite{koloskova2020unified,lian2017can} provides the convergence guarantee on the average of the local models $\avgmodel{t} = \frac{1}{n} \sum_{i \in [n]} \model{i}{t}$. However, as nodes cannot access the global averaged model, we provide the convergence rate directly on the local models. Nonetheless, the same convergence guarantee as Theorem~\ref{th:main} also holds for the global averaged model. 
\end{remark}

\subsection{Discussion and comparison to prior results} 
\label{sec:discussion_theory}
To provide context for the above result, we note that the convergence rate of decentralized \sgd with non-convex loss functions and a doubly stochastic mixing matrix ~\cite{koloskova2020unified, li2019communication} is
\small
\begin{align}
\label{eq:rate_dsgd}
  \mathcal{O} \left(\sqrt{\frac{L\Delta_0\sigma^2}{nT}} + \sqrt[3]{\frac{ L^2\Delta_0^2 \left( p\sigma^2 + \heterparam^2\right)}{p^2T^2}} + \frac{L\Delta_0}{pT}\right),
\end{align}
\normalsize
where $p \in (0, 1]$ is  the  spectral gap of the mixing matrix and $\nicefrac{1}{p}$ is bounded by $\mathcal{O}(n^2)$ for ring~\cite{koloskova2019decentralized}, $\mathcal{O}(n)$ for torus~\cite{koloskova2019decentralized}, $\mathcal{O}(1)$ for Erdős–Rényi random graph~\cite{nedic2018network}, $\mathcal{O}(\log n)$ for exponential graph~\cite{ying2021exponential}, and $\mathcal{O}(1)$ for EquiTopo~\cite{song2022communicationefficient}.
We now compare the convergence of \sys against other topologies across two key properties: linear speed-up and transient iterations.

\paragraph{Linear speed-up} Both of our convergence rates preserve a linear speed-up of  $\mathcal{O}(\nicefrac{1}{\sqrt{nT}})$ in the first term. For \elo, this term is the same as~\eqref{eq:rate_dsgd}.
However, in the case of \ello, in addition to the stochastic noise $\sigma$, this term also depends on the heterogeneity parameter $\heterparam$ that vanishes when increasing the sample size $\samplenum$. This comes from the fact that, unlike \elo, the communication phase of \ello does not preserve the exact average of the local models (\ie $\sum_{i \in [n]} \model{i}{t+1} \neq \sum_{i \in [n]} \model{i}{t+\nicefrac{1}{2} }$), and it only preserves the average in expectation. This adds an error term to the rate of \ello. However, as the update vector remains an unbiased estimate of the average gradient, this additional term does not violate the linear speed-up property.
Our analysis suggests that setting $\samplenum \approx \frac{\heterparam^2}{\sigma^2}$ can help mitigate the effect of heterogeneity on the convergence of \ello.
Intuitively, more data heterogeneity leads to more disagreement between the nodes, which requires more communication rounds to converge.

\paragraph{Transient  iterations}
Our convergence rates offer superior second and third terms compared to those in~\eqref{eq:rate_dsgd}. This is because first, $p$ can take very small values, particularly when the topology connectivity is low (\eg, $\frac{1}{p} \in \mathcal{O}(n^2)$ for a ring) and second, even when the underlying topology is well-connected and $\frac{1}{p} \in \mathcal{O}(1)$, such as in EquiTopo~\cite{song2022communicationefficient}, the second term in our rates still outperforms the one in~\eqref{eq:rate_dsgd} by a factor of ${\sqrt[3]{s}}$. This improvement is reflected in the number of transient iterations before the linear speed-up stage, \ie the number of rounds required for the first term of the convergence rate to dominate the second term~\cite{ying2021exponential}.
In our rates, the number of transient iterations is in $\mathcal{O}(\nicefrac{n^3}{s^2})$, whereas in~\eqref{eq:rate_dsgd}, it is $\mathcal{O}(\nicefrac{n^3}{p^2})$ for the homogeneous case and $\mathcal{O}(\nicefrac{n^3}{p^4})$ for the heterogeneous case. We remark that $p \in (0,1]$, but $s \geq 1$ is an integer; therefore, when $\samplenum \in \mathcal{O}(1)$ the number of transient iterations for \ac{EL}  matches the state-of-the-art bound. However, it can be provably improved by increasing $\samplenum$ depending on the communication capabilities of the nodes, which adds more flexibility to \sys with theoretical guarantees compared to other methods. For instance, for $\samplenum \in \mathcal{O} (\log n)$ as in Erdős–Rényi and EquiStatic topologies, the number of transient iterations for \ac{EL} becomes $\mathcal{O}(n^3/\log^2 n)$ which outperforms other methods (also see Table~\ref{tab:tran2}).
 Crucially,  a key implication of this result is that our algorithm requires fewer rounds and, therefore, less communication to converge. We empirically show the savings in the communication of \ac{EL} in~\Cref{sec:eval}.

\section{Evaluation}
\label{sec:eval}
We present here the empirical evaluation of \sys and compare it with state-of-the-art \ac{DL} baselines.
We first describe the experimental setup and then show the performance of \elo and \ello.

\subsection{Experimental setup}
\paragraph{Network setup and implementation}
We deploy 96 \Ac{DL} nodes for each experiment, interconnected according to the evaluated topologies.
When experimenting with {\it s}-regular topologies, each node maintains a fixed degree of $\lceil log_2(n) \rceil$, \ie each node has 7 neighbors.
For \elo we introduce a centralized coordinator (oracle) that generates a \newtext{random} $7$-Regular topology at the start of each round and informs all nodes about their neighbors for the upcoming round.
For \ello we make each node aware of all other nodes at the start of the experiment. %
To remain consistent with other baselines, we fix $s = \lceil log_2(n) \rceil = 7$ when experimenting with \sys, \ie each node sends model updates to 7 other nodes each round.
Both \elo and \ello were implemented using the \texttt{DecentralizePy} framework~\cite{dhasade:2023:dcpy} and Python 3.8\footnote{Source code can be found at \href{https://github.com/sacs-epfl/decentralizepy/releases/tag/epidemic-neurips-2023}{https://github.com/sacs-epfl/decentralizepy/releases/tag/epidemic-neurips-2023}.}.
For reproducibility, a uniform seed was employed for all pseudo-random generators within each node.

\paragraph{Baselines}
We compare the performance of \elo and \ello against three variants of \dpsgd. %
Our first baseline is a fully-connected topology (referred to as \texttt{Fully connected}), which presents itself as the upper bound for performance given its optimal convergence rate~\cite{assran2019stochastic}.
We also compare with a {\it s}-regular static topology, the non-changing counterpart of \elo (referred to as \texttt{7-Regular static}).
\newtext{This topology is randomly generated at the start of each run according to the random seed, but is kept fixed during the learning.}
Finally, we compare \sys against the communication-efficient topology U-EquiStatic~\cite{song2022communicationefficient}.
Since U-EquiStatic topologies can only have even degrees, we generate U-EquiStatic topologies with a degree of 8 to ensure a fair comparison.
We refer to this setting as \texttt{8-U-EquiStatic}.

\paragraph{Learning task and partitioning}
We evaluate the baseline algorithms using the \cifar image classification dataset~\cite{krizhevsky2014cifar} \newtext{and the \femnist dataset, the latter being part of the LEAF benchmark~\cite{caldas2018leaf}.
In this section we focus on the results for \cifar and present the results for \femnist in~\cref{sec:exp_femnist}.}
We employ a \ac{non-IID} data partitioning using the Dirichlet distribution function~\cite{hsu2019measuring}, parameterized with $\alpha = 0.1$.
We use a \textsc{GN-LeNet} convolutional neural network~\cite{hsieh2020non}.
Full details on our experimental setup \newtext{and hyperparameter tuning} can be found in \newtext{Appendix~\ref{sec:exp_setup}}.

\paragraph{Metrics}
We measure the average top-1 test accuracy and test loss of the model on the test set in the \cifar learning task every 20 communication rounds.
Furthermore, we present the average top-1 test accuracy against the cumulative outgoing communication per node in bytes.
We also emphasize the number of communication rounds taken by \sys to reach the best top-1 test accuracy of static 7-Regular topology.
We run each experiment five times with different random seeds, and we present the average metrics with a 95\% confidence interval.

\begin{figure}[tb!]
    \vspace{-0.3cm}
	\centering
	\inputplot{plots/elcomparison-cifar}{2}
	\caption{Communication rounds vs. top-1 test accuracy and (left) and communication volume per node vs. test accuracy (right) \newtext{for the \cifar dataset}.\rp{new data}}
	\label{fig:elcomparison}
    \vspace{-0.3cm}
\end{figure}

\subsection{\sys against baselines}
\Cref{fig:elcomparison} shows the performance of \elo and \ello against the baselines for the \cifar dataset.
\dpsgd over a fully-connected topology achieves the highest accuracy, as expected, but incurs more than an order of magnitude of additional communication.
\elo converges faster than its static counterparts of \texttt{7-Regular static} and \texttt{8-U-EquiStatic}.
After \num{3000} communication rounds, \elo and \ello achieve up to 2.2\% higher accuracy compared to \texttt{7-Regular static} (the most competitive baseline).
Moreover, \elo takes up to 1.7$\times$ fewer communication rounds and saves \SI{2.9}{\gibi\byte} of communication to reach the best accuracy attained by 7-Regular static.
Surprisingly, \texttt{8-U-EquiStatic} shows worse performance compared to \texttt{7-Regular static}.
The plots further highlight that the less constrained \ello variant has a very competitive performance compared to \elo: there is negligible utility loss when sampling locally compared to generating a $ s $-Regular graph every round.
We provide additional observations and results in~\cref{app:exp}.
\renewcommand{\arraystretch}{1.1}
\begin{table}[t!]
    \small
    \vspace{-0.2cm}
	\centering
	\caption{A summary of key experimental findings \newtext{for the \cifar dataset.}}
    \begin{threeparttable}
	\begin{tabular}{c c c c}
		\toprule
		\textbf{Topology} & \textbf{Top-1 Test Accuracy} & \textbf{Top-1 Test Loss} & \textbf{Communication to Target Accuracy} \\
            & (\%) & & (\SI{}{\gibi\byte})\\
		\midrule
        \textbf{Fully connected} & \num{68.67} & \num{0.98} & \num{30.54}\\
		\textbf{7-Regular static} & \num{64.32} & \num{1.21} & \num{7.03}\\
        \textbf{8-U-EquiStatic}\tnote{1} & \num{62.72} & \num{1.28} & - \\
        \textbf{\elo} & \num{66.56} & \num{1.10} & \num{4.12} \\
        \textbf{\ello} & \num{66.14} & \num{1.12} & \num{4.37} \\
		\bottomrule
	\end{tabular}
     \begin{tablenotes}
        \item[1] The 8-U-EquiStatic topology did not reach the 64.32\% target accuracy.
        \end{tablenotes}
    \end{threeparttable}
 \label{tab:experimental_results}
 \vspace{-0.2cm}
\end{table}

We summarize our main experimental findings in~\cref{tab:experimental_results}, which outlines the highest achieved top-1 test accuracy, lowest top-1 test loss, and communication cost to a particular target accuracy for our evaluated baselines.
This target accuracy is chosen as the best top-1 test accuracy achieved by the 7-Regular static topology (64.32\%), and the communication cost to reach this target accuracy is presented for all the topologies in the right-most column of~\cref{tab:experimental_results}.
In summary, \elo and \ello converge faster and to higher accuracies compared to 7-Regular static and 8-U-EquiStatic topologies, and require 1.7$\times$ and 1.6$\times$ less communication cost to reach the target accuracy, respectively.

\subsection{\newtext{Sensitivity Analysis of sample size $ s $}}

\begin{figure*}[t!]
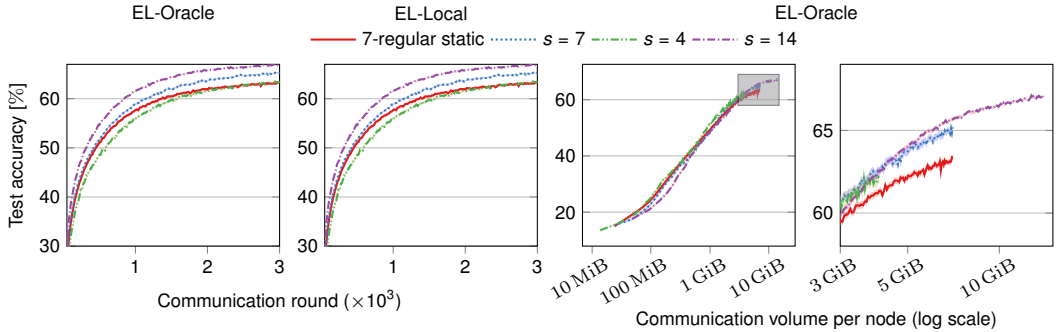

	\centering
	\inputplot{plots/sensitivity_rebuttal.tex}{3}
	\caption{The test accuracy of \elo and \ello (left) and communication volume per node of \elo (right) for different value of sample size $s$ and a 7-Regular static topology.}
	\label{fig:sensitivity}
 \vspace{-0.3cm}
\end{figure*}

\newtext{The sample size $ s $ determines the number of outgoing neighbours of each node at each round of \elo and \ello.
We show the impact of this parameter on the test accuracy in the two left-most plots in~\cref{fig:sensitivity}, for varying values of $s$, against the baseline of a 7-regular static graph.
We chose the values of $s$ as $\lceil \frac{log_2(n)}{2} \rceil = 4$, and $\lceil 2log_2(n) \rceil = 14$, over and above $\lceil log_2(n) \rceil = 7$.
We observe that, as expected, increasing $s$ leads to quicker convergence for both \elo and \ello (see Theorem~\ref{th:main}).
Increasing $\samplenum$ also directly increases the communication volume.
The two right-most plots in~\cref{fig:sensitivity} show the test accuracy of \elo when the  communication volume increases.
After \num{3000} communication rounds, \elo with $ s = 14 $ has incurred a communication volume of \SI{15.1}{\gibi\byte}, compared to \SI{4.3}{\gibi\byte} for $ s = 4 $. 
An optimal value of $ s $ depends on the network of the environment where \ac{EL} is deployed. In data center settings where network links usually have high capacities, one can employ a high value of $\samplenum$. In edge settings with limited network capacities, however, the value of $\samplenum$ should be smaller to avoid network congestion.}

\section{Related Work}
\label{sec:related_work}

\paragraph{Decentralized Parallel Stochastic Gradient Descent (D-PSGD)}
Stochastic Gradient Descent is a stochastic variant of the gradient descent algorithm and is widely used to solve optimization problems at scale~\cite{ghadimi2013stochastic,moulines2011non,nemirovski2009robust}.
Decentralized algorithms using SGD have gained significant adoption as a method to train machine learning models, with Decentralized Parallel SGD (D-PSGD) being the most well-known \ac{DL} algorithm~\cite{ghadimi2016mini,lian2015asynchronous,lian2017can,lian2018asynchronous,patarasuk2009bandwidth}.
D-PSGD avoids a server by relying on a communication topology describing how peers exchange their model updates~\cite{lian2017can}.

\paragraph{Static topologies}
In a static topology, all nodes and edges remain fixed throughout the training process.
The convergence speed of decentralized optimization algorithms closely depends on the mixing properties of the underlying topologies. There is a large body of work~\cite{benjamini2014mixing,beveridge2016best,boyd2005mixing,nachmias2008critical} studying the mixing time of different random graphs  such as the Erdos-Renyi graph and the geometric random graph.
As the Erdos-Renyi graphs have better mixing properties~\cite{nedic2018network}, we compare \ac{EL} with this family in Table~\ref{tab:tran2}.
In \cite{ying2021exponential}, the authors prove that static exponential graphs in which nodes are connected to $ \mathcal{O}(\log n) $ neighbors are an effective topology for \ac{DL}.
EquiStatic is a static topology family whose consensus rate is independent of the network size~\cite{song2022communicationefficient}.

\paragraph{Semi-dynamic topologies} 
Several \ac{DL} algorithms impose a fixed communication topology at the start of the training but have nodes communicate with random subsets of their neighbors each round.
We classify such topologies as semi-dynamic topologies.
The one-peer exponential graph has each node cycling through their $ \mathcal{O}(\log n) $ neighbors and has a similar convergence rate to static exponential graphs~\cite{ying2021exponential}.
In the AD-PSGD algorithm, each node independently trains and averages its model with the model of a randomly selected neighbour~\cite{lian2018asynchronous}.
EquiDyn is a dynamic topology family whose consensus rate is independent of the network size~\cite{song2022communicationefficient}.

\paragraph{Time-varying and randomized topologies}
A time-varying topology is a topology that changes throughout the training process~\cite{casteigts2012time}.
The convergence properties of time-varying graphs in distributed optimization have been studied by various works~\cite{koloskova2019decentralized, lobel2010distributed,lu2020decentralized,nedic2014distributed,nedic2018network}. While these works provide convergence guarantees for decentralized optimization algorithms over time-varying (or random) topologies, they do not show the superiority of randomized communication, and they do not prove a convergence rate that cannot be obtained with a static graph~\cite{song2022communicationefficient}. Another work~\cite{liu2022decentralized} considers client subsampling in decentralized optimization where at each round, only a fraction of nodes participate in the learning procedure.
This approach is orthogonal to the problem we consider in this paper.

\newtext{\paragraph{Gossip Learning}}
Closely related to \ello is \ac{GL}, a \ac{DL} algorithm in which each node progresses through rounds independently from other peers~\cite{ormandi2013gossip}.
In each round, a node sends their model to another random node and aggregates incoming models received by other nodes, weighted by age.
While \ac{GL} shows competitive performance compared to centralized approaches~\cite{hegedHus2019gossip,hegedHus2021decentralized}, \newtext{its convergence on non-convex loss functions has not been theoretically proven yet~\cite{onoszko2021decentralized}.}
\newtext{While at a high level, \ello with $ s = 1$ may look very similar to GL, there are some subtle differences.
First, GL operates in an asynchronous manner whereas \ac{EL} proceeds in synchronous rounds.
Second, GL applies weighted averaging when aggregating models, based on model age, and \ac{EL} aggregates models unweighted.
Third, if a node in \ac{GL} receives multiple models in a round, this node will aggregate the received model with its local model for each received model separately, whereas in \ac{EL}, there will be a single model update per round, and all the received models from that round are aggregated together.}

In contrast to the existing works, \ac{EL} leverages a fully dynamic and random topology that changes each round.
While static and semi-dynamic topologies have shown to be effective in certain settings, \ac{EL} surpasses their performance by enabling faster convergence, both in theory and in practice. %

\section{Conclusions}
We have introduced Epidemic Learning (EL), a new DL algorithm that accelerates model convergence and saves communication costs by leveraging randomized communication.
The key idea of \ac{EL} is that in each round, each node samples $ s $ other nodes in a $n$-node network and sends their model updates to these sampled nodes.
We introduced two variants of the sampling approach: \elo in which the communication topology forms a $s$-regular graph each round, and \ello which forms a $s$-out graph.
We theoretically proved the convergence of both \ac{EL} variants and derived that the number of transient iterations for \ac{EL} is in $\mathcal{O}(\nicefrac{n^3}{\samplenum^2})$, outperforming the best-known bound $\mathcal{O}({n^3})$ by a factor of $ s^2 $ for smooth non-convex functions.
Our experimental evaluation on the \cifar learning task showcases the 1.7$\times$ quicker convergence of \ac{EL} compared to baselines and highlights the reduction in communication costs.

\section*{Acknowledgement}
This work has been supported in part by the Swiss National
Science Foundation (SNSF) project 200021-200477.

\bibliographystyle{plainnat}
\bibliography{bibliography}
\newpage

\appendix
\newpage
\newpage
\begin{center}
    \LARGE \bf {Appendix}
\end{center}

\section*{Organization}

The appendices are organized as follows:
\begin{itemize}
    \item Appendix~\ref{sec:conv_proof} proves the convergence guarantee of \sys.
    \item In Appendix~\ref{sec:lemma_proof}, we prove the key lemmas that are used in the convergence proof.
    \item Appendix~\ref{app:exp} provides experimental details, computational resources used, and further evaluation of \sys with \iid data distributions and the \femnist dataset.
    \item In~\cref{sec:network_connectivity}, we discuss the effect of network connectivity on the performance of \ac{EL}.
\end{itemize}

\TODO{Update when almost done with the CR version!}

\section{Convergence Proof}
\label{sec:conv_proof}
In this section, we prove Theorem~\ref{th:main}, by setting
\begin{align}
    \localstep :=  \min\left\{\sqrt{\frac{n\Delta_0}{TL\sigma^2}},\sqrt[3]{\frac{\Delta_0}{100T \contractionp{\samplenum}L^2 \left( \sigma^2 + \heterparam^2\right)}},\frac{1}{20L}\right\},
\end{align}
for \elo,
and
\begin{align}
    \localstep := \min\left\{\sqrt{\frac{n\Delta_0}{T \left(211\sigma^2+332 \contraction{\samplenum}\heterparam^2\right)L}},\sqrt[3]{\frac{\Delta_0}{250 T\contraction{\samplenum} L^2\left( \sigma^2 + \heterparam^2\right)}},\frac{1}{20L}\right\},
\end{align}
for \ello.

{\bf Notation:} For any set of $n$ vectors $\{x^{(1)},\ldots,x^{(n)}\}$, we denote their average by $\bar{x} := \frac{1}{n}\sum_{i \in [n]}x^{(i)}$.

\subsection{Proof steps} 
We outline here the critical elements for proving Theorem~\ref{th:main}.

\paragraph{Mixing efficiency of \sys.}
First, we analyze the mixing properties of the communication phase of both variants of the algorithm, and we observe that:
\begin{enumerate}[label=(\alph*)]
     \item while \elo preserves the exact average of the local vectors, i.e., $\avgmodel{t + 1} = \avgmodel{t + \nicefrac{1}{2}}$, \ello
only does so in expectation. This property is critical for ensuring that the global update is an unbiased estimator of the average of local gradients, which is necessary for obtaining convergence guarantees with linear speed-up.
\item The communication phase of both \ello and \elo reduce the drift among the local models by a factor of  $\mathcal{O}(\nicefrac{1}{s})$.
\item  The variance of the averaged model after the communication phase, in \ello is in $\mathcal{O}(\nicefrac{1}{ns})$.
\end{enumerate}
More formally,  we have the following lemma.

\noindent \fcolorbox{black}{white}{
\parbox{0.97\textwidth}{\centering
\begin{restatable}{lemma}{lemcontraction}
\label{lem:contraction}
    Consider Algorithm~\ref{algo}. Let $n \geq 2$, $s\geq 1$, $T\geq 1$, and $t \in \{0,\ldots,T-1 \}$.
\begin{enumerate}
\item For \textbf{\elo}, we have
\begin{enumerate}
        \item ${\avgmodel{t+1}} = {\avgmodel{t+\nicefrac{1}{2}}}$, \label{dadsf}
        \item $\frac{1}{n^2} \sum_{i,j \in [n]} \expect{\norm{\model{i}{t+1} - \model{j}{t+1}}^2} \leq {\contractionp{\samplenum}} \cdot  \frac{1}{n^2} \sum_{i,j \in [n]} \expect{ \norm{\model{i}{t+\nicefrac{1}{2}} - \model{j}{t+\nicefrac{1}{2}}}^2} $,
    \end{enumerate}
     where 
    \begin{align*}
        \contractionp{\samplenum} := \frac{1}{\samplenum + 1}\left( 1 - \frac{s}{n-1}\right) \in \mathcal{O}(\frac{1}{\samplenum}).
    \end{align*}

\item
For \textbf{\ello}, we have:

    \begin{enumerate}
        \item \label{elo:avg} $\expect{\avgmodel{t+1}} = \expect{\avgmodel{t+\nicefrac{1}{2}}}$ (Note that we do not necessarily have ${\avgmodel{t+1}} = {\avgmodel{t+\nicefrac{1}{2}}}$),
        \item $\frac{1}{n^2} \sum_{i,j \in [n]} \expect{\norm{\model{i}{t+1} - \model{j}{t+1}}^2} \leq {\contraction{\samplenum}} \cdot  \frac{1}{n^2} \sum_{i,j \in [n]} \expect{ \norm{\model{i}{t+\nicefrac{1}{2}} - \model{j}{t+\nicefrac{1}{2}}}^2} $,
        \item \label{eq:propt} $\expect{\norm{\avgmodel{t+1}-\avgmodel{t+\nicefrac{1}{2}}}^2} \leq \frac{\contraction{\samplenum}}{2n} \cdot   \frac{1}{n^2} \sum_{i,j \in [n]} \expect{ \norm{\model{i}{t+\nicefrac{1}{2}} - \model{j}{t+\nicefrac{1}{2}}}^2} $,
    \end{enumerate}
    where 
    \begin{align*}
        \contraction{\samplenum} := 
        \frac{1}{\samplenum}\left(1-\left(1-\frac{\samplenum}{n-1}\right)^n\right)-\frac{1}{n-1} \in \mathcal{O}(\frac{1}{\samplenum})
    \end{align*}
\end{enumerate}
\end{restatable}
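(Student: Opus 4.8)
The plan is to treat the communication step as left-multiplication of the stacked models by a random row-stochastic matrix and to compute the relevant first and second moments exactly. Throughout, write $y_i := \model{i}{t+\nicefrac{1}{2}}$ and $x_i := \model{i}{t+1}$, and condition on the models at step $t+\nicefrac12$, treating the $y_i$ as fixed (the unconditional bounds then follow by the tower rule). I will repeatedly use the identity $\frac{1}{n^2}\sum_{i,j}\normtwo{y_i-y_j}^2 = \frac{2}{n}\sum_i\normtwo{y_i-\bar y}^2$ (and likewise for $x$), which reduces every claim to the centered vectors $\tilde y_i := y_i - \bar y$, and I set $S := \sum_i \normtwo{\tilde y_i}^2$. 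The one structural fact separating the two variants is the law of the received set $\receivedsubset{i}{t}$: in \elo it is the neighbourhood in an exchangeable $s$-regular graph, so $\card{\receivedsubset{i}{t}}=s$ deterministically; in \ello, since distinct source nodes sample independently, each other node lands in $\receivedsubset{i}{t}$ independently with probability $p := s/(n-1)$, whence $R_i := \card{\receivedsubset{i}{t}}\sim\mathrm{Bin}(n-1,p)$ and, given $R_i=r$, the set $\receivedsubset{i}{t}$ is a uniform $r$-subset of $[n]\setminus\{i\}$.

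For \elo, undirectedness and $s$-regularity make $W = \frac{1}{s+1}(I+A)$ symmetric and doubly stochastic (each column of $A$ also sums to $s$), which immediately gives average preservation $\bar x = \bar y$. For the contraction I expand $(s+1)^2\sum_i\normtwo{\tilde x_i}^2$ using $\tilde x_i = \frac{1}{s+1}(\tilde y_i + \sum_{j}A_{ij}\tilde y_j)$, take expectations with the exchangeable-edge moments $\expect{A_{ij}}=\frac{s}{n-1}$ and $\expect{A_{ij}A_{ik}}=\frac{s(s-1)}{(n-1)(n-2)}$ for distinct indices, and collapse the cross terms via $\sum_i \tilde y_i = 0$ (so $\sum_{j\ne i}\tilde y_j = -\tilde y_i$). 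Every surviving term is a multiple of $S$, and the aggregate coefficient simplifies exactly to $(s+1)\bigl(1-\tfrac{s}{n-1}\bigr)$, giving $\expect{\sum_i\normtwo{\tilde x_i}^2} = \contractionp{\samplenum}\, S$ and hence the claimed bound with equality.

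For \ello I first prove unbiasedness $\expect{\avgmodel{t+1}}=\expect{\avgmodel{t+\nicefrac12}}$. By symmetry it suffices to show the expected coefficient of a fixed $y_j$ in $\sum_i x_i$ is $1$; this coefficient is $\frac{1}{R_j+1}+\sum_{i\in\mathcal T_j}\frac{1}{R_i+1}$, where $\mathcal T_j$ is the set of $s$ targets of $j$. Its expectation equals $\expect{\frac{1}{1+R}} + s\,\expect{\frac{1}{2+R'}}$ with $R\sim\mathrm{Bin}(n-1,p)$ and $R'\sim\mathrm{Bin}(n-2,p)$ (a fixed hit target contributes one guaranteed reception, leaving $n-2$ independent sources). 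Using the integral identity $\expect{\frac{1}{c+X}} = \int_0^1 t^{c-1}(1-p+pt)^{m}\,dt$ for $X\sim\mathrm{Bin}(m,p)$, a short computation yields the sum exactly $1$. For the contraction and the variance claims I compute $\expect{\normtwo{x_i}^2}$ and the cross moments $\expect{\iprod{x_i}{x_j}}$ by conditioning on the received sets: given $R_i=r$, $x_i$ is the average of $\tilde y_i$ and a uniform $r$-subset sum, so expanding and inserting $\expect{\frac{1}{R_i+1}}$, $\expect{\frac{1}{(R_i+1)^2}}$ together with the pairwise inclusion probabilities reduces $\expect{\sum_i\normtwo{x_i}^2}$ and $\expect{\normtwo{\sum_i x_i}^2}$ to explicit multiples of $S$. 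Finally $\expect{\sum_i\normtwo{x_i-\bar x}^2} = \expect{\sum_i\normtwo{x_i}^2} - \tfrac1n\expect{\normtwo{\sum_i x_i}^2}$ delivers both remaining claims with the constant $\contraction{\samplenum}$.

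The main obstacle is the \ello cross moment $\expect{\iprod{x_i}{x_j}}$ for $i\ne j$, required for $\expect{\normtwo{\sum_i x_i}^2}$ in both the contraction and the variance bounds. Unlike the single-node terms, the receptions of $i$ and $j$ are \emph{not} independent: a common source spends its $s$ choices among $n-1$ targets, so the indicators "source $k$ hits $i$" and "source $k$ hits $j$" are negatively correlated, and this coupling interacts with the nonlinear normalizations $1/(R_i+1)$ and $1/(R_j+1)$ that resist factorization. I expect to handle this by representing each $1/(R+1)$ through its integral form, conditioning on the joint pair of received sets, and tracking the second-order inclusion probabilities carefully; once the exact expressions are assembled, the rest is elementary algebra and the (possibly lossy) absorption of lower-order terms into the stated $\mathcal{O}(1/s)$ constants $\contractionp{\samplenum}$ and $\contraction{\samplenum}$.
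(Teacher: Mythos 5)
Your treatment of \elo is correct and essentially the paper's own argument (exchangeable-edge moments, collapse of cross terms via $\sum_i \tilde y_i = 0$, exact coefficient $\frac{1}{\samplenum+1}(1-\frac{\samplenum}{n-1})$), and your unbiasedness argument for \ello --- tracking the expected coefficient of each $y_j$ in $\sum_i x_i$ via $\expect{\frac{1}{1+R}} + \samplenum\,\expect{\frac{1}{2+R'}} = 1$ --- is valid and a nice alternative to the paper's per-node computation of $\expect{\model{i}{t+1}}$. However, for \ello properties (b) and (c) there is a genuine gap, and it sits exactly where you flag "the main obstacle": your route through $\expect{\sum_i\norm{x_i-\bar x}^2} = \expect{\sum_i\norm{x_i}^2} - \frac{1}{n}\expect{\norm{\sum_i x_i}^2}$ forces you to evaluate the coupled cross moments $\expect{\frac{1}{(R_i+1)(R_j+1)}}$, $\expect{\frac{\mathcal{I}^{(i)}_k}{(R_i+1)(R_j+1)}}$ and $\expect{\frac{\mathcal{I}^{(i)}_k\mathcal{I}^{(j)}_l}{(R_i+1)(R_j+1)}}$ exactly, which you have not done and which do not reduce to the clean closed form $\contraction{\samplenum}$; the proposal defers precisely the step that carries the content of the lemma.

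The paper avoids this entirely by centering at the \emph{pre}-communication average $\bar y$ rather than at $\bar x$. For (b) it computes $\frac{1}{n}\sum_i\expect{\norm{x_i-\bar y}^2}$ exactly --- this involves only single-receiver moments $\expect{\mathcal{I}^{(i)}_j\mid R_i}=\frac{R_i}{n-1}$, $\expect{\mathcal{I}^{(i)}_j\mathcal{I}^{(i)}_k\mid R_i}=\frac{R_i(R_i-1)}{(n-1)(n-2)}$ and the scalar $\expect{\frac{1}{R_i+1}}$ for a binomial, with no coupling across receivers --- and then invokes the variational fact that $\bar x$ minimizes $z\mapsto\sum_i\norm{x_i-z}^2$, so $\sum_i\norm{x_i-\bar x}^2\leq\sum_i\norm{x_i-\bar y}^2$. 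For (c) it expands $\expect{\norm{\bar x-\bar y}^2}$ and observes that, by exchangeability, every cross term is a \emph{nonnegative} constant times a sum of the form $\sum_{i}\sum_{j\neq i}\iprod{\tilde y_i}{\tilde y_j}$ (or its triple/quadruple analogues), each of which is $\leq 0$ because $\sum_i\tilde y_i=0$; the cross terms are therefore simply dropped, never computed. If you want to complete your proof, either adopt this centering-at-$\bar y$ device, or commit to the double-integral representation $\expect{\frac{1}{(R_i+1)(R_j+1)}}=\int_0^1\!\int_0^1\expect{t^{R_i}u^{R_j}}\,dt\,du$ with the joint generating function factored over sources --- but be aware that the latter yields an expression you would then still need to upper-bound by $\contraction{\samplenum}$, so it buys you extra work rather than a sharper constant in the statement as given.
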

}
}

\paragraph{Uniform bound on model drift and gradient drift.}Next, using the previous lemma, we prove that, for a careful choice of the step-size $\localstep$, the local models stay close to each other during the learning process.

\noindent \fcolorbox{black}{white}{
\parbox{0.97\textwidth}{\centering
\begin{restatable}{lemma}{lemSGDdrift}
\label{lem:SGD:drift}
    Suppose that assumptions~\ref{ass:smoothness},~\ref{ass:stochastic_noise}, and \ref{ass:bounded_heterogeneity} hold true. Consider Algorithm~\ref{algo}. Consider a step-size $\localstep$ such that $\localstep  \leq \frac{1}{20L}$. For any $t\geq0$, we obtain that
\begin{align*}
    \frac{1}{n^2}\sum_{i,j\in[n]}\expect{\norm{\model{i}{t} - \model{j}{t} }^2} 
    & \leq 20\frac{1+3\contractionb{\samplenum}}{(1-\contractionb{\samplenum})^2} \contractionb{\samplenum}\localstep^2 \left( \sigma^2 + \heterparam^2\right),
\end{align*}
and
\begin{align*}
    \frac{1}{n^2}\sum_{i,j \in [n]}\expect{\norm{ \gradient{i}{t}-\gradient{j}{t}}^2}\leq 15\left(   \sigma^2 +  \heterparam^2 \right),
\end{align*}
 where $\contractionb{\samplenum} = \contractionp{\samplenum}$ for \elo and $\contractionb{\samplenum} = \contraction{\samplenum}$  for \ello as defined in Lemma~\ref{lem:contraction}.
\end{restatable}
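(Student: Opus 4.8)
The plan is to track the \emph{model drift} $\Xi_t := \frac{1}{n^2}\sum_{i,j\in[n]}\expect{\norm{\model{i}{t}-\model{j}{t}}^2}$ through one round, derive a self-contained scalar recursion $\Xi_{t+1}\le A\,\Xi_t+B$ with $A<1$, and then close it into the claimed uniform bound by a trivial induction that uses the base case $\Xi_0=0$ (all nodes start from the common model $\model{}{0}$). A workhorse throughout is the elementary variance identity $\frac{1}{n^2}\sum_{i,j}\norm{a_i-a_j}^2=\frac{2}{n}\sum_i\norm{a_i-\bar a}^2$, which turns pairwise drifts into variances around the mean and lets me invoke Assumption~\ref{ass:bounded_heterogeneity} directly.

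First I would control the gradient drift $\Phi_t:=\frac{1}{n^2}\sum_{i,j}\expect{\norm{\gradient{i}{t}-\gradient{j}{t}}^2}$ in terms of $\Xi_t$. Splitting $\gradient{i}{t}=\nabla\localloss{i}(\model{i}{t})+(\gradient{i}{t}-\nabla\localloss{i}(\model{i}{t}))$, the round-$t$ stochastic noise is zero-mean conditionally on the models and independent across nodes, so all cross terms vanish in expectation and the noise contributes at most $2\sigma^2$ by Assumption~\ref{ass:stochastic_noise}. For the deterministic parts I insert $\pm\nabla\localloss{i}(\avgmodel{t})$ and $\pm\nabla\localloss{j}(\avgmodel{t})$, apply the relaxed triangle inequality, bound the two endpoint terms by $L^2\norm{\model{i}{t}-\avgmodel{t}}^2$ via Assumption~\ref{ass:smoothness}, and bound the middle term $\frac{1}{n^2}\sum_{i,j}\norm{\nabla\localloss{i}(\avgmodel{t})-\nabla\localloss{j}(\avgmodel{t})}^2$ by $2\heterparam^2$ via the variance identity and Assumption~\ref{ass:bounded_heterogeneity} evaluated at $\avgmodel{t}$. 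This gives the intermediate estimate $\Phi_t\le 3L^2\Xi_t+6(\sigma^2+\heterparam^2)$.

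Next I would assemble the recursion. The local step yields $\model{i}{t+\nicefrac{1}{2}}-\model{j}{t+\nicefrac{1}{2}}=(\model{i}{t}-\model{j}{t})-\localstep(\gradient{i}{t}-\gradient{j}{t})$, so Young's inequality with a free parameter $a>0$ followed by the contraction factor $\contractionb{\samplenum}$ from Lemma~\ref{lem:contraction} (part~1b for \elo, part~2b for \ello) gives
\begin{equation*}
\Xi_{t+1}\le \contractionb{\samplenum}(1+a)\,\Xi_t+\contractionb{\samplenum}\bigl(1+\tfrac1a\bigr)\localstep^2\,\Phi_t .
\end{equation*}
Substituting $\Phi_t\le 3L^2\Xi_t+6(\sigma^2+\heterparam^2)$ collapses this into $\Xi_{t+1}\le A\,\Xi_t+B$ with $A=\contractionb{\samplenum}(1+a)+3\contractionb{\samplenum}(1+\tfrac1a)\localstep^2L^2$ and $B=6\contractionb{\samplenum}(1+\tfrac1a)\localstep^2(\sigma^2+\heterparam^2)$. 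The crucial tuning is the $\contractionb{\samplenum}$-dependent choice $a=\frac{1-\contractionb{\samplenum}}{2\contractionb{\samplenum}}$, which makes the leading part of $A$ equal to $\frac{1+\contractionb{\samplenum}}{2}$ and gives $1+\tfrac1a=\frac{1+\contractionb{\samplenum}}{1-\contractionb{\samplenum}}$; since $\contractionb{\samplenum}$ is bounded away from $1$ by an absolute constant (indeed $\contractionp{\samplenum}\le\tfrac12$, while $\contraction{\samplenum}<1$ with worst case $\contraction{1}\to 1-e^{-1}$), the step-size condition $\localstep\le\frac{1}{20L}$ (so $\localstep^2L^2\le\frac{1}{400}$) makes the remaining term of $A$ negligible and forces $A<1$ with $1-A\in\Theta(1-\contractionb{\samplenum})$.

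Finally, because $\Xi_0=0\le\frac{B}{1-A}$, induction gives $\Xi_t\le\frac{B}{1-A}$ for all $t$; plugging in the chosen $a$ and the denominator estimate $1-A\ge\frac14(1-\contractionb{\samplenum})$ yields a bound of the form $\tfrac{}{}\lesssim \contractionb{\samplenum}\frac{1+\contractionb{\samplenum}}{(1-\contractionb{\samplenum})^2}\localstep^2(\sigma^2+\heterparam^2)$, which after collecting constants is dominated by the stated $20\frac{1+3\contractionb{\samplenum}}{(1-\contractionb{\samplenum})^2}\contractionb{\samplenum}\localstep^2(\sigma^2+\heterparam^2)$. The gradient-drift bound then follows by reinserting this uniform estimate into $\Phi_t\le 3L^2\Xi_t+6(\sigma^2+\heterparam^2)$ and using $\localstep^2L^2\le\frac{1}{400}$ together with the bound on $\contractionb{\samplenum}$ to absorb $3L^2\Xi_t$ into a small constant, comfortably below $15(\sigma^2+\heterparam^2)$. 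I expect the main obstacle to be the constant bookkeeping around the choice of $a$: it must simultaneously keep $A$ strictly below $1$ under only $\localstep\le\frac{1}{20L}$ and reproduce the precise $(1-\contractionb{\samplenum})^2$ denominator, which is exactly why a fixed split fails and the $\contractionb{\samplenum}$-dependent choice is needed; a minor preliminary is verifying $\contractionb{\samplenum}<1$ uniformly for both $\contractionp{\samplenum}$ and $\contraction{\samplenum}$, which is immediate for $\contractionp{\samplenum}$ and a short estimate of $1-(1-\tfrac{\samplenum}{n-1})^n$ for $\contraction{\samplenum}$.
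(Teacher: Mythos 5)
Your proposal is correct and follows essentially the same route as the paper's proof: bound the gradient drift by the model drift plus $\sigma^2+\heterparam^2$ via smoothness, heterogeneity, and the variance identity (the paper's Lemma~\ref{lem:diam_equal}), then combine Young's inequality on the local step with the contraction factor $\contractionb{\samplenum}$ from Lemma~\ref{lem:contraction} into a one-step recursion whose Young parameter is tuned proportionally to $\nicefrac{(1-\contractionb{\samplenum})}{\contractionb{\samplenum}}$, and unroll from $\Xi_0=0$. The only substantive differences are cosmetic constant bookkeeping — you exploit the conditional independence of the stochastic noise to get the sharper intermediate bound $3L^2\Xi_t+6(\sigma^2+\heterparam^2)$ where the paper's five-way split gives $5L^2\Xi_t+10(\sigma^2+\heterparam^2)$ — but note that your specific choices $a=\frac{1-\contractionb{\samplenum}}{2\contractionb{\samplenum}}$ and $1-A\geq\frac{1}{4}(1-\contractionb{\samplenum})$ yield a prefactor $24(1+\contractionb{\samplenum})$, which exceeds the stated $20(1+3\contractionb{\samplenum})$ when $\contractionb{\samplenum}<\nicefrac{1}{9}$; this is repaired either by the paper's choice $c=\frac{1-\contractionb{\samplenum}}{4\contractionb{\samplenum}}$ (giving $\frac{1+3\contractionb{\samplenum}}{4}$ as the leading contraction) or by a less lossy lower bound on $1-A$.
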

}
}

\paragraph{Bound on the gradient norm.} Next, we obtain a bound on the gradient of the loss function $\loss(\model{}{})$ by analyzing the growth of $\loss(\model{}{})$, over the trajectory of the average of local models in Algorithm~\ref{algo}.

\noindent \fcolorbox{black}{white}{
\parbox{0.97\textwidth}{\centering
\begin{restatable}{lemma}{lemgrowthSGD}
\label{lem:growth_SGD}
Suppose that assumptions~\ref{ass:smoothness} and~\ref{ass:stochastic_noise} hold true. Consider Algorithm~\ref{algo} 
with $\localstep \leq \frac{1}{2L}$. For any $t \in \{0,\ldots,T-1\}$, we obtain that
\begin{align}\nonumber
    \expect{\norm{\nabla \loss \left( \avgmodel{t} \right)}^2}&\leq \frac{2}{\localstep}\expect{\loss(\avgmodel{t})- \loss(\avgmodel{t+1})} +\frac{L^2}{2n^2} \sum_{i,j \in [n]} \expect{\norm{\model{i}{t} - \model{j}{t}}^2}\\ 
     & + {2L\localstep \frac{\sigma^2}{n}} + \frac{2L}{\localstep}{\expect{\norm{\avgmodel{t+1} -\avgmodel{t+\nicefrac{1}{2}}}^2}}.\nonumber
\end{align}
\end{restatable}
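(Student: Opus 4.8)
The plan is to establish a one-step ``descent'' inequality for the trajectory of the \emph{average} iterate $\avgmodel{t}$ by exploiting only the $L$-smoothness of $\loss$ (Assumption~\ref{ass:smoothness}) and the bounded-variance assumption (Assumption~\ref{ass:stochastic_noise}); the communication phase enters only through the correction term $\avgmodel{t+1}-\avgmodel{t+\nicefrac{1}{2}}$. First I would record that averaging the local half-steps $\model{i}{t+\nicefrac{1}{2}}=\model{i}{t}-\localstep\gradient{i}{t}$ gives $\avgmodel{t+\nicefrac{1}{2}}=\avgmodel{t}-\localstep\,\bar g_t$ with $\bar g_t:=\frac{1}{n}\sum_{i\in[n]}\gradient{i}{t}$, so the full one-round displacement splits as
\[
\avgmodel{t+1}-\avgmodel{t} \;=\; -\localstep\,\bar g_t + e_t,\qquad e_t:=\avgmodel{t+1}-\avgmodel{t+\nicefrac{1}{2}}.
\]
I would also introduce the noiseless average gradient $h_t:=\frac{1}{n}\sum_{i\in[n]}\nabla\localloss{i}(\model{i}{t})$. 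Writing $\mathcal{F}_t$ for the information available at the start of round $t$, unbiasedness of the stochastic gradients gives $\expect{\bar g_t\mid\mathcal{F}_t}=h_t$, while conditional independence of the per-node data sampling together with Assumption~\ref{ass:stochastic_noise} yields the variance-reduction bound $\expect{\norm{\bar g_t-h_t}^2\mid\mathcal{F}_t}\le\nicefrac{\sigma^2}{n}$. Finally, by Lemma~\ref{lem:contraction} the correction is conditionally mean-zero (it is identically $0$ for \elo, and $\expect{e_t\mid\{\model{i}{t+\nicefrac{1}{2}}\}}=0$ for \ello).

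Next I would apply smoothness between $\avgmodel{t}$ and $\avgmodel{t+1}$,
\[
\loss(\avgmodel{t+1})\le\loss(\avgmodel{t})+\iprod{\nabla\loss(\avgmodel{t})}{\avgmodel{t+1}-\avgmodel{t}}+\tfrac{L}{2}\norm{\avgmodel{t+1}-\avgmodel{t}}^2,
\]
and substitute the decomposition. In the linear term, $\expect{\iprod{\nabla\loss(\avgmodel{t})}{e_t}}=0$ by the mean-zero property, and $-\localstep\,\expect{\iprod{\nabla\loss(\avgmodel{t})}{\bar g_t}}=-\localstep\,\expect{\iprod{\nabla\loss(\avgmodel{t})}{h_t}}$ by unbiasedness; the polarization identity $\iprod{a}{b}=\tfrac12(\norm a^2+\norm b^2-\norm{a-b}^2)$ then turns the latter into $-\tfrac{\localstep}{2}\norm{\nabla\loss(\avgmodel{t})}^2-\tfrac{\localstep}{2}\norm{h_t}^2+\tfrac{\localstep}{2}\norm{\nabla\loss(\avgmodel{t})-h_t}^2$. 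For the quadratic term I would avoid the $\iprod{\bar g_t}{e_t}$ cross term via $\norm{a+b}^2\le 2\norm a^2+2\norm b^2$, giving $\tfrac{L}{2}\norm{\avgmodel{t+1}-\avgmodel{t}}^2\le L\localstep^2\norm{\bar g_t}^2+L\norm{e_t}^2$, and then bound $\expect{\norm{\bar g_t}^2}\le\norm{h_t}^2+\nicefrac{\sigma^2}{n}$.

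Collecting the two $\norm{h_t}^2$ contributions produces the coefficient $\localstep(L\localstep-\tfrac12)$, which is non-positive precisely under the hypothesis $\localstep\le\tfrac{1}{2L}$, so that term is dropped. The remaining drift term is handled by smoothness: since $\nabla\loss(\avgmodel{t})-h_t=\frac{1}{n}\sum_{i\in[n]}\bigl(\nabla\localloss{i}(\avgmodel{t})-\nabla\localloss{i}(\model{i}{t})\bigr)$, Jensen's inequality and Assumption~\ref{ass:smoothness} give $\norm{\nabla\loss(\avgmodel{t})-h_t}^2\le\frac{L^2}{n}\sum_{i\in[n]}\norm{\avgmodel{t}-\model{i}{t}}^2=\frac{L^2}{2n^2}\sum_{i,j\in[n]}\norm{\model{i}{t}-\model{j}{t}}^2$, using the identity relating variance to mean pairwise squared distance. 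Isolating $\expect{\norm{\nabla\loss(\avgmodel{t})}^2}$ and multiplying through by $\tfrac{2}{\localstep}$ then reproduces the claimed bound term by term, including the exact constants $2L\localstep\nicefrac{\sigma^2}{n}$ and $\tfrac{2L}{\localstep}\expect{\norm{\avgmodel{t+1}-\avgmodel{t+\nicefrac{1}{2}}}^2}$.

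The only non-routine point, and hence the main obstacle, is the careful bookkeeping of the two independent randomness sources within a round—stochastic-gradient noise and communication sampling—so that the cross terms involving $e_t$ genuinely vanish in conditional expectation; this is exactly where Lemma~\ref{lem:contraction} (exact averaging for \elo, averaging in expectation for \ello) is indispensable. Everything else is the standard smooth-descent computation combined with the averaging bound $\expect{\norm{\bar g_t-h_t}^2\mid\mathcal{F}_t}\le\nicefrac{\sigma^2}{n}$.
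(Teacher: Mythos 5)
Your proposal is correct and follows essentially the same route as the paper's proof: smoothness applied along the averaged iterates, the decomposition $\avgmodel{t+1}-\avgmodel{t}=-\localstep\,\bar g_t+e_t$ with the correction term killed in expectation via Lemma~\ref{lem:contraction}, Young's inequality on the quadratic term, the $\nicefrac{\sigma^2}{n}$ variance reduction, the step-size condition to discard the $\norm{h_t}^2$ contribution, and the smoothness-plus-pairwise-distance bound on the drift, yielding the same constants. The only cosmetic difference is that you apply the polarization identity before absorbing the $L\localstep^2$ coefficient rather than after, which is algebraically equivalent.
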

}
}

\subsection{Proof of Theorem~\ref{th:main}}
We can now prove the theorem using the above lemmas.

\begin{proof}[Proof of Theorem~\ref{th:main}]
By Young's inequality, for any $i \in [n]$, we have 
\begin{align*}
    \expect{ \norm{\nabla \loss \left( \model{i}{t} \right)}^2} &\leq 2\expect{ \norm{\nabla \loss \left( \avgmodel{t} \right)}^2}  + 2\expect{ \norm{\nabla \loss \left( \avgmodel{t} \right)-\nabla \loss \left( \model{i}{t} \right)}^2}\\&\leq 2\expect{ \norm{\nabla \loss \left( \avgmodel{t} \right)}^2}  + 2L^2\expect{ \norm{\avgmodel{t} - \model{i}{t} }^2},
\end{align*}
where in the second inequality we used Assumption~\ref{ass:smoothness}.
Then, averaging over $i \in [n]$, we obtain that 
\begin{align*}
    \frac{1}{n} \sum_{i \in [n]}\expect{ \norm{\nabla \loss \left( \model{i}{t} \right)}^2} &\leq 2\expect{ \norm{\nabla \loss \left( \avgmodel{t} \right)}^2}  + 2L^2 \frac{1}{n} \sum_{i \in [n]}\expect{ \norm{\avgmodel{t} - \model{i}{t} }^2}\\
    &=  2\expect{ \norm{\nabla \loss \left( \avgmodel{t} \right)}^2}  + L^2 \frac{1}{n^2} \sum_{i,j \in [n]}\expect{ \norm{\model{i}{t}- \model{j}{t} }^2},
\end{align*}
where we used Lemma~\ref{lem:diam_equal}.
Combining this with Lemma~\ref{lem:growth_SGD}, we obtain that\footnote{In order to obtain a bound on $\expect{ \norm{\nabla \loss \left(\avgmodel{t}\right)}^2}$, we can skip this step and directly use Lemma~\ref{lem:growth_SGD}.}
\begin{align}\nonumber
        \frac{1}{n} \sum_{i \in [n]}\expect{ \norm{\nabla \loss \left( \model{i}{t} \right)}^2} &\leq \frac{4}{\localstep}\expect{\loss(\avgmodel{t})- \loss(\avgmodel{t+1})} +\frac{2L^2}{n^2} \sum_{i,j \in [n]} \expect{\norm{\model{i}{t} - \model{j}{t}}^2}\\ 
     & + {4L\localstep \frac{\sigma^2}{n}} + \frac{4L}{\localstep}{\expect{\norm{\avgmodel{t+1} -\avgmodel{t+\nicefrac{1}{2}}}^2}}.\label{eq:mainmain}
\end{align}

We now analyze the two variants of the algorithm separately.

\textbf{\elo:}\\
In this case, by Property~\eqref{elo:avg} of Lemma~\ref{lem:contraction}, we have ${\avgmodel{t+1}} - {\avgmodel{t+\nicefrac{1}{2}}} = 0$, therefore, by \eqref{eq:mainmain}, we have
\begin{align}\label{eq:maione}
       \frac{1}{n} \sum_{i \in [n]}\expect{ \norm{\nabla \loss \left( \model{i}{t} \right)}^2} &\leq \frac{4}{\localstep}\expect{\loss(\avgmodel{t})- \loss(\avgmodel{t+1})} +\frac{2L^2}{n^2} \sum_{i,j \in [n]} \expect{\norm{\model{i}{t} - \model{j}{t}}^2} + {4L\localstep \frac{\sigma^2}{n}} 
\end{align}
Note also that by Lemma~\ref{lem:contraction}, we have
\begin{align*}
     \frac{1}{n^2}\sum_{i,j\in[n]}\expect{\norm{\model{i}{t} - \model{j}{t} }^2} 
    & \leq 20\frac{1+3\contractionp{\samplenum}}{(1-\contractionp{\samplenum})^2} \contractionp{\samplenum}\localstep^2 \left( \sigma^2 + \heterparam^2\right),
\end{align*}
Moreover, as shown in Remark~\ref{rem:triv}, we have $\contractionp{\samplenum}\leq\nicefrac{1}{2}$, and thus, $20\frac{1+3\contractionp{\samplenum}}{(1-\contractionp{\samplenum})^2} \leq 200$. Therefore,
\begin{align*}
     \frac{1}{n^2}\sum_{i,j\in[n]}\expect{\norm{\model{i}{t} - \model{j}{t} }^2} 
    & \leq 200\contractionp{\samplenum}\localstep^2 \left( \sigma^2 + \heterparam^2\right),
\end{align*}
Combining this with \eqref{eq:maione}, we obtain that
\begin{align*}
    \frac{1}{n} \sum_{i \in [n]}\expect{ \norm{\nabla \loss \left( \model{i}{t} \right)}^2} &\leq \frac{4}{\localstep}\expect{\loss(\avgmodel{t})- \loss(\avgmodel{t+1})} + 400\contractionp{\samplenum}L^2\localstep^2 \left( \sigma^2 + \heterparam^2\right) + {4L\localstep \frac{\sigma^2}{n}} 
\end{align*}
Averaging over $t \in \{0,\ldots,T-1\}$, we obtain that
\begin{align}\nonumber
    \frac{1}{nT} \sum_{i \in [n]}\sum_{t=0}^{T-1}\expect{ \norm{\nabla \loss \left( \model{i}{t} \right)}^2} &\leq \frac{4}{\localstep T}\expect{\loss(\avgmodel{0})- \loss(\avgmodel{T})} + 400\contractionp{\samplenum}L^2\localstep^2 \left( \sigma^2 + \heterparam^2\right) + {4L\localstep \frac{\sigma^2}{n}} \\
    &\leq  \frac{4}{\localstep T}\Delta_0+ 400\contractionp{\samplenum}L^2\localstep^2 \left( \sigma^2 + \heterparam^2\right) + {4L\localstep \frac{\sigma^2}{n}},\label{eq:steppthree}
\end{align}
where we used the fact that $\loss(\avgmodel{0})- \loss(\avgmodel{T}) \leq \loss(\avgmodel{0})- \min_{\model{}{}\in \R^d} \loss(\model{}{})\leq \Delta_0$.
Setting 
\begin{align}\label{eq:steppone}
    \localstep := \min\left\{\sqrt{\frac{n\Delta_0}{TL\sigma^2}},\sqrt[3]{\frac{\Delta_0}{100T \contractionp{\samplenum}L^2 \left( \sigma^2 + \heterparam^2\right)}},\frac{1}{20L}\right\},
\end{align}
as $\frac{1}{\min\{a,b\}}= \max\{\frac{1}{a},\frac{1}{b}\}$ for any $a,b\geq0$, we have
\begin{align}\label{eq:setpptwo}
    \frac{1}{\localstep} = \max\left\{\sqrt{\frac{TL\sigma^2}{n\Delta_0}},\sqrt[3]{\frac{100T \contractionp{\samplenum}L^2 \left( \sigma^2 + \heterparam^2\right)}{\Delta_0}},20L\right\}\leq \sqrt{\frac{TL\sigma^2}{n\Delta_0}}+\sqrt[3]{\frac{100T \contractionp{\samplenum}L^2 \left( \sigma^2 + \heterparam^2\right)}{\Delta_0}}+20L,
\end{align}
Plugging \eqref{eq:steppone} and \eqref{eq:setpptwo} in \eqref{eq:steppthree}, we obtain that
\begin{align*}
    \frac{1}{nT} \sum_{i \in [n]}\sum_{t=0}^{T-1}\expect{ \norm{\nabla \loss \left( \model{i}{t} \right)}^2}
    &\leq 8\sqrt{\frac{L \Delta_0 \sigma^2}{nT}} + 38 \sqrt[3]{\frac{\contractionp{\samplenum}L^2\Delta_0^2 \left( \sigma^2 + \heterparam^2\right)}{T^2}} + { \frac{80L\Delta_0}{T}}\\
    & \in \mathcal{O} \left(\sqrt{\frac{L\Delta_0\sigma^2}{nT}} + \sqrt[3]{\frac{\contractionp{\samplenum} L^2\Delta_0^2 \left( \sigma^2 + \heterparam^2\right)}{T^2}} + \frac{L\Delta_0}{T}\right)
\end{align*}
where we used the fact that $\frac{800}{100^{(2/3)}}\leq38$.

\textbf{\ello:}\\
The proof is similar to \elo, up to some additional error terms. By Property \eqref{eq:propt} of Lemma~\ref{lem:contraction}, we obtain that
\begin{align}\nonumber
    \expect{\norm{\avgmodel{t+1} -\avgmodel{t+\nicefrac{1}{2}}}^2} 
    &\leq  \frac{\contraction{\samplenum}}{2n} \cdot \frac{1}{n^2} \sum_{i,j \in [n]} \expect{\norm{\model{i}{t+\nicefrac{1}{2}} -\model{j}{t+\nicefrac{1}{2}}}^2} \\
    &=  \frac{\contraction{\samplenum}}{2n} \cdot \frac{1}{n^2} \sum_{i,j \in [n]} \expect{\norm{ \model{i}{t} - \localstep{}  \gradient{i}{t} -\model{j}{t} + \localstep{}  \gradient{j}{t} }^2} \\
    &\leq \frac{\contraction{\samplenum}}{n}\cdot\frac{1}{n^2} \sum_{i,j \in [n]} \expect{\norm{\model{i}{t} -\model{j}{t}}^2} +\frac{ \localstep^2\contraction{\samplenum}}{n}\cdot\frac{1}{n^2} \sum_{i,j \in [n]} \expect{\norm{\gradient{i}{t} -\gradient{j}{t}}^2} \label{eq:diam_bound_growth},
\end{align}
\color{black}
where in the last inequality we use Young's inequality.
Combining this with~\eqref{eq:mainmain}, we obtain that 
\begin{align}\nonumber
    &\frac{1}{n} \sum_{i \in [n]}\expect{ \norm{\nabla \loss \left( \model{i}{t} \right)}^2} \leq \frac{4}{\localstep}\expect{\loss(\avgmodel{t})- \loss(\avgmodel{t+1})} +\frac{2L^2}{n^2} \sum_{i,j \in [n]} \expect{\norm{\model{i}{t} - \model{j}{t}}^2}\\  \nonumber
     & + {4L\localstep \frac{\sigma^2}{n}} + \frac{4L}{\localstep}\frac{\contraction{\samplenum}}{n}\cdot\frac{1}{n^2} \sum_{i,j \in [n]} \expect{\norm{\model{i}{t} -\model{j}{t}}^2} +\frac{4L \localstep\contraction{\samplenum}}{n}\cdot\frac{1}{n^2} \sum_{i,j \in [n]} \expect{\norm{\gradient{i}{t} -\gradient{j}{t}}^2}\\ \nonumber
     &= \frac{4}{\localstep}\expect{\loss(\avgmodel{t})- \loss(\avgmodel{t+1})} +(2L^2+\frac{4L\contraction{\samplenum}}{n\localstep})\frac{1}{n^2} \sum_{i,j \in [n]} \expect{\norm{\model{i}{t} - \model{j}{t}}^2}\\ 
     & + {4L\localstep \frac{\sigma^2}{n}} +\frac{4L \localstep\contraction{\samplenum}}{n}\cdot\frac{1}{n^2} \sum_{i,j \in [n]} \expect{\norm{\gradient{i}{t} -\gradient{j}{t}}^2} \label{eq:nrm}
\end{align}

Note that by Remark~\ref{rem:triv}, for any $\samplenum\geq 1$, and $n\geq 2$ we have $\contraction{\samplenum} \leq 1- \frac{1}{e}$, where $e$ is Euler's number. Therefore, 
\[
 20\frac{1+3\contraction{\samplenum}}{(1-\contraction{\samplenum})^2}\leq 500.
\]
Using this, the first bound in Lemma~\ref{lem:SGD:drift} can be simplified to 
\begin{align}
\label{eq:simplifiedbound}
    \frac{1}{n^2}\sum_{i,j\in[n]}\expect{\norm{\model{i}{t} - \model{j}{t} }^2} 
    & \leq 500 \contraction{\samplenum}\localstep^2 \left( \sigma^2 + \heterparam^2\right)
\end{align}

    Combining this with~\eqref{eq:nrm}, and the second bound of Lemma~\ref{lem:SGD:drift}, we have

\begin{align*}
    \frac{1}{n} \sum_{i \in [n]}\expect{ \norm{\nabla \loss \left( \model{i}{t}\right)}^2} &\leq \frac{4}{\localstep} \expect{\loss(\avgmodel{t})- \loss(\avgmodel{t+1})} + {4L\localstep \frac{\sigma^2}{n}} \left(1 + 15 \contraction{\samplenum} + 500\contraction{\samplenum}^2 \right)\\
    &+{4L\localstep \frac{\heterparam^2}{n}} \left( 15 \contraction{\samplenum} + 500\contraction{\samplenum}^2 \right) 
     + L^2 1000\contraction{\samplenum}\localstep^2 \left( \sigma^2 + \heterparam^2\right).
\end{align*}
Taking the average over $t \in \{0,\ldots,T-1\}$, we obtain that 
\begin{align*}
    \frac{1}{nT}\sum_{t = 0}^{T-1} \sum_{i \in [n]}\expect{ \norm{\nabla \loss \left( \model{i}{t}\right)}^2} &\leq \frac{4}{T\localstep} \Delta_0  + {4L\localstep \frac{\sigma^2}{n}} \left(1 + 15 \contraction{\samplenum} + 500\contraction{\samplenum}^2 \right)\\
    &+{4L\localstep \frac{\heterparam^2}{n}} \left( 15 \contraction{\samplenum} + 500\contraction{\samplenum}^2 \right) 
     + L^2 1000\contraction{\samplenum}\localstep^2 \left( \sigma^2 + \heterparam^2\right).
\end{align*}

Noting that $\contraction {\samplenum}< 1 -\frac{1}{e}$ (Remark~\ref{rem:triv}), we have
\begin{align}\label{eq:srrr}
    \frac{1}{nT}\sum_{t = 0}^{T-1} \sum_{i \in [n]}\expect{ \norm{\nabla \loss \left( \model{i}{t}\right)}^2} &\leq \frac{4}{T\localstep} \Delta_0 + {4\frac{ L\localstep}{n} \left(211\sigma^2+ 332\contraction{\samplenum}\heterparam^2\right)}
     + 1000\contraction{\samplenum} L^2\localstep^2 \left( \sigma^2 + \heterparam^2\right),
\end{align}
Now, setting
\begin{equation}\label{sttt}
    \localstep = \min\left\{\sqrt{\frac{n\Delta_0}{T \left(211\sigma^2+332 \contraction{\samplenum}\heterparam^2\right)L}},\sqrt[3]{\frac{\Delta_0}{250 T\contraction{\samplenum} L^2\left( \sigma^2 + \heterparam^2\right)}},\frac{1}{20L}\right\},
\end{equation}
we have 
\begin{align}\nonumber
    \frac{1}{\localstep} &= \max\left\{\sqrt{\frac{T \left(211\sigma^2+332 \contraction{\samplenum}\heterparam^2\right)L}{n\Delta_0}},\sqrt[3]{\frac{250 T\contraction{\samplenum} L^2\left( \sigma^2 + \heterparam^2\right)}{\Delta_0}},{20L}\right\}\\
    &\leq \sqrt{\frac{T \left(211\sigma^2+332 \contraction{\samplenum}\heterparam^2\right)L}{n\Delta_0}}+\sqrt[3]{\frac{250 T\contraction{\samplenum} L^2\left( \sigma^2 + \heterparam^2\right)}{\Delta_0}}+{20L}\label{eq:sqqq}
\end{align}
Plugging \eqref{eq:sqqq}, and~\eqref{sttt} in \eqref{eq:srrr} we obtain that
\begin{align*}
    \frac{1}{nT}\sum_{t = 0}^{T-1} \sum_{i \in [n]}\expect{ \norm{\nabla \loss \left( \model{i}{t}\right)}^2} &\leq 8\sqrt{\frac{L\Delta_0\left(211\sigma^2+ 332\contraction{\samplenum}\heterparam^2\right)}{nT}} 
     + 51 \sqrt[3]{\frac{\contraction{\samplenum} L^2\Delta_0^2\left( \sigma^2 + \heterparam^2\right)}{T^2}} + \frac{80L}{T} \Delta_0  \\
     &\in \mathcal{O} \left(\sqrt{\frac{L\Delta_0(\sigma^2+ \contraction{\samplenum}\heterparam^2)}{nT}} + \sqrt[3]{\frac{\contraction{\samplenum} L^2\Delta_0^2 \left( \sigma^2 + \heterparam^2\right)}{T^2}} + \frac{L\Delta_0}{T}\right).
\end{align*}
where we used the fact that $\frac{2000}{250^{(2/3)}}\leq 51$.
This concludes the proof.
\end{proof}

\section{Proof of the main lemmas}
\label{sec:lemma_proof}
{\bf Notation:}  Let $\mathcal{P}_t$ represent the history from step 0 to $t$. More precisely, we define
\[\mathcal{P}_t \coloneqq \left\{\model{i}{0}, \ldots, \, \model{i}{t}; ~ i = 1, \ldots, \, n \right\}.\] 
 Moreover, we use the notation $\condexpect{t}{\cdot} := \expect{\cdot ~ \vline ~ \mathcal{P}_t}$ to denote the conditional expectation given the history $\mathcal{P}_t$ and  $\expect{\cdot}$ to denote the total expectation over the randomness of the algorithm; therefore we have, $\expect{\cdot} := \condexpect{0}{ \cdots \condexpect{T}{\cdot}}$.

We first prove two simple useful lemmas.

\begin{lemma}
\label{lem:jensen_prime}
    for any set of $k$ vectors $x^{(1)}, \ldots, x^{(k)}$, we have
    \begin{align*}
        \norm{\sum_{i=1}^kx^{(i)}}^2\leq k\sum_{i=1}^k\norm{x^{(i)}}^2.
    \end{align*}
\end{lemma}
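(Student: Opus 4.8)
The plan is to obtain this inequality as a direct consequence of the convexity of the squared-norm map $x \mapsto \norm{x}^2$, which is precisely a scaled instance of Jensen's inequality (consistent with the name of the lemma); an equivalent and fully elementary route proceeds through the triangle and Cauchy--Schwarz inequalities. I would present the Jensen route as the primary argument because it is the shortest.

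First I would invoke convexity of $\norm{\cdot}^2$ and apply Jensen's inequality to the uniform average of the $k$ vectors, which yields
\[
\norm{\frac{1}{k}\sum_{i=1}^k x^{(i)}}^2 \leq \frac{1}{k}\sum_{i=1}^k \norm{x^{(i)}}^2 .
\]
Then I would multiply both sides by $k^2$ and use the identity $\norm{\tfrac{1}{k}\sum_{i=1}^k x^{(i)}}^2 = \tfrac{1}{k^2}\norm{\sum_{i=1}^k x^{(i)}}^2$ to arrive immediately at the claimed bound $\norm{\sum_{i=1}^k x^{(i)}}^2 \leq k \sum_{i=1}^k \norm{x^{(i)}}^2$.

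For a self-contained argument that avoids any appeal to Jensen, I would instead chain two standard inequalities: the triangle inequality $\norm{\sum_{i=1}^k x^{(i)}} \leq \sum_{i=1}^k \norm{x^{(i)}}$, followed by the Cauchy--Schwarz inequality applied to the scalars $\norm{x^{(i)}}$ against the all-ones vector,
\[
\left(\sum_{i=1}^k \norm{x^{(i)}}\right)^2 \leq \left(\sum_{i=1}^k \norm{x^{(i)}}^2\right)\left(\sum_{i=1}^k 1\right) = k\sum_{i=1}^k \norm{x^{(i)}}^2 .
\]
Squaring the first inequality and substituting the second gives the result.

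There is essentially no genuine obstacle here: the statement is a textbook inequality and each route is one or two lines. The only point requiring minor care is that both steps rely on $\norm{\cdot}$ being a norm (so the triangle inequality and convexity are available), which holds for the Euclidean norm used throughout; the inequality is moreover trivially tight up to the degenerate case and needs no special casing for $k \geq 1$.
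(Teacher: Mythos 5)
Your proposal is correct, and both of your routes are essentially the paper's own argument: the paper combines the triangle inequality with Jensen's inequality applied to the scalars $\norm{x^{(i)}}$ (i.e., convexity of $t\mapsto t^2$), which is exactly your second route with Cauchy--Schwarz replaced by its Jensen formulation, and is equivalent to your primary vector-level Jensen argument. No meaningful difference in approach or content.
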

\begin{proof}
    By the triangle inequality, we have
     \begin{align*}
        \norm{\sum_{i=1}^kx^{(i)}}\leq \sum_{i=1}^k\norm{x^{(i)}}.
    \end{align*}
    The result then follows by noting that using Jensen's inequality, we have
    \begin{align*}
        \left(\frac{1}{k}\sum_{i=1}^k\norm{x^{(i)}}\right)^2\leq\frac{1}{k}\sum_{i=1}^k\norm{x^{(i)}}^2.
    \end{align*}
\end{proof}

\begin{lemma}
\label{lem:diam_equal}
    For any set $\{x^{(i)}\}_{i \in [n]}$ of $n$ vectors, we have $$ \frac{1}{n} \sum_{i\in [n]} \norm{ x^{(i)}- \bar{x}}^2 = \frac{1}{2} \cdot \frac{1}{n^2} \sum_{i,j \in [n]} \norm{ x^{(i)}- x^{(j)}}^2.$$
\end{lemma}
\begin{proof}
\begin{align*}
     \frac{1}{n^2} \sum_{i,j \in [n]} \norm{ x^{(i)}- x^{(j)}}^2 &= \frac{1}{n^2} \sum_{i,j \in [n]} \norm{ (x^{(i)} -\bar{x}) - (x^{(j)} - \bar{x})}^2\\ &= \frac{1}{n^2}\sum_{i,j \in [n]}\left[ \norm{x^{(i)} -\bar{x} }^2 + \norm{x^{(j)} -\bar{x} }^2 + 2 \iprod{x^{(i)} -\bar{x}}{x^{(j)} -\bar{x}}\right] \\
     &= \frac{2}{n} \sum_{i,j \in [n]} \norm{x^{(i)} -\bar{x} }^2  + \frac{2}{n^2}  \sum_{i \in [n]} \iprod{x^{(i)} - \bar{x}}{ \sum_{j \in [n]} (x^{(j)} -\bar{x})}.
\end{align*}

Noting that  $\sum_{j \in [n]} (x^{(j)} -\bar{x}) = 0$, yields the desired result.
\end{proof}
\subsection{Proof of Lemma~\ref{lem:contraction}}
\lemcontraction*
\begin{proof}
For simplicity of the notation, we make the dependence on $t$ implicit, and we denote by $\model{i}{} :=\model{i}{t+\nicefrac{1}{2}}$, the input vector and $\modelp{i}{} :=\model{i}{t+1}$ the output vector of node $i$ for the communication phase, and $\receivedsubset{i}{} := \receivedsubset{i}{t}$.
Therefore, we have 
\begin{align*}
    \modelp{i}{} = \frac{1}{\card{\receivedsubset{i}{} }+1} \left(\model{i}{} + \sum_{j \in \receivedsubset{i}{} } \model{j}{} \right).
\end{align*}
Moreover, in the proof of this lemma,  all the expectations $\expect{\cdot}$ are only with respect to the randomness of the communication phase, i.e., we compute the expectations assuming $\model{i}{}$'s are given for all $i \in [n]$, and $\modelp{i}{}$'s are computed by a random communication phase. Taking the total expectation of the final expression provided for each property then proves it as stated in the lemma.

First, we consider \textbf{\elo}:

Recall that  in \elo, the communication topology forms a random undirected $\samplenum$-regular graph, and for each node $i$, all other nodes have the same probability of being $i$'s neighbor.

\textbf{Property (a):}\\
We have
\begin{align*}
    \bar{\modelp{}{}} &= \frac{1}{n} 
    \sum_{i \in [n]} \modelp{i}{}\\
    &= \frac{1}{n} 
    \sum_{i \in [n]}\frac{1}{\card{\receivedsubset{i}{} }+1} \left(\model{i}{} + \sum_{j \in \receivedsubset{i}{} } \model{j}{} \right)
    \\
    &= \frac{1}{n(\samplenum+1)} 
    \sum_{i \in [n]}\left(\model{i}{} + \sum_{j \in \receivedsubset{i}{} } \model{j}{} \right)\\
    &= \frac{1}{n(\samplenum+1)} 
    \left(\sum_{i \in [n]}\model{i}{} + \sum_{i \in [n]}\sum_{j \in \receivedsubset{i}{} } \model{j}{} \right),
\end{align*}
where we used the fact that in \elo, each node receives exactly $\samplenum$ models. Now as each node also sends its model to $\samplenum$ other nodes, we have
\begin{align*}
    \bar{\modelp{}{}}
    &= \frac{1}{n(\samplenum+1)}  \left(\sum_{i \in [n]}\model{i}{} +
    \sum_{i \in [n]}\samplenum\model{i}{}\right) = \frac{1}{n} 
    \sum_{i \in [n]}\model{i}{} = \avgmodel{}{} .
\end{align*}

\textbf{Property (b):}\\
We have
\begin{align}\nonumber
    &\frac{1}{n} \sum_{i \in [n]} \expect{\norm{\modelp{i}{} -\bar{\model{}{}}}^2} = \frac{1}{n} \sum_{i \in [n]} \expect{\norm{ \frac{1}{\card{\receivedsubset{i}{} }+1} \left(\model{i}{} + \sum_{j \in \receivedsubset{i}{} } \model{j}{} \right)-\bar{\model{}{}}}^2}\\ \nonumber
    &= \frac{1}{n(s+1)^2} \sum_{i \in [n]} \expect{\norm{ (\model{i}{}-\bar{\model{}{}}) + \sum_{j \in \receivedsubset{i}{} } (\model{j}{}-\bar{\model{}{}}) }^2}\\ \nonumber
    &= \frac{1}{n(s+1)^2} \sum_{i \in [n]} {\norm{ \model{i}{}-\bar{\model{}{}}}^2} + \frac{1}{n(s+1)^2} \sum_{i \in [n]} \expect{\norm{ \sum_{j \in \receivedsubset{i}{} } (\model{j}{}-\bar{\model{}{}}) }^2}\\
    &+\frac{2}{n(s+1)^2} \sum_{i \in [n]} \expect{\iprod{ (\model{i}{}-\bar{\model{}{}}) }{ \sum_{j \in \receivedsubset{i}{} } (\model{j}{}-\bar{\model{}{}}) }}.\label{eq:intone}
\end{align}
Let us define $\mathcal{I}^{(i)}_j$ the indicator function 
showing whether node $j$ and node $i$ are neighbors, i.e., $\mathcal{I}^{(i)}_j=1$ if $i$ and $j$ are neighbors and $\mathcal{I}^{(i)}_j=0$ otherwise.  We then have
\begin{align}\nonumber
    &\sum_{i \in [n]}\expect{\norm{ \sum_{j \in \receivedsubset{i}{} } (\model{j}{}-\bar{\model{}{}}) }^2} =  \sum_{i \in [n]}\expect{\norm{ \sum_{j \in [n]\setminus\{i\} }\mathcal{I}^{(i)}_j (\model{j}{}-\bar{\model{}{}}) }^2} \\ \nonumber
    &=  \sum_{i \in [n]}\expect{\sum_{j \in [n]\setminus\{i\} }\mathcal{I}^{(i)}_j\norm{  \model{j}{}-\bar{\model{}{}} }^2} + \sum_{i\in[n]}\expect{\sum_{j \neq i}\sum_{k\neq i,k \neq j}\mathcal{I}^{(i)}_j\mathcal{I}^{(i)}_k\iprod{\model{j}{}-\bar{\model{}{}}}{\model{k}{}-\bar{\model{}{}}}}\\
    &=  \sum_{i \in [n]}\sum_{j \in [n]\setminus\{i\} }\expect{\mathcal{I}^{(i)}_j}\norm{  \model{j}{}-\bar{\model{}{}} }^2 + \sum_{i\in[n]}\sum_{j \neq i}\sum_{k\neq i,k \neq j}\expect{\mathcal{I}^{(i)}_j\mathcal{I}^{(i)}_k}\iprod{\model{j}{}-\bar{\model{}{}}}{\model{k}{}-\bar{\model{}{}}}\nonumber.
\end{align}
Now note that by symmetry (all nodes have the same probability of being a neighbor of node $i$).
\begin{align*}
    \expect{\mathcal{I}^{(i)}_j} = \frac{s}{n-1}.
\end{align*}
Similarly $\mathcal{I}^{(i)}_j \mathcal{I}^{(i)}_k$ is only $1$ when both $j$
and $k$ are neighbors of $i$, thus
\begin{align*}
    \expect{\mathcal{I}^{(i)}_j \mathcal{I}^{(i)}_k} = \frac{s(s-1)}{(n-1)(n-2)}.
\end{align*}
Therefore, 
\begin{align}\nonumber
    \sum_{i \in [n]}\expect{\norm{ \sum_{j \in \receivedsubset{i}{} } (\model{j}{}-\bar{\model{}{}}) }^2} 
    &= \samplenum \sum_{i \in [n]}\norm{  \model{j}{}-\bar{\model{}{}} }^2+ \sum_{i\in[n]}\sum_{j \neq i}\sum_{k\neq i,k \neq j} \frac{s(s-1)}{(n-1)(n-2)}\iprod{\model{j}{}-\bar{\model{}{}}}{\model{k}{}-\bar{\model{}{}}}\\
    &=\samplenum \sum_{i \in [n]}\norm{  \model{j}{}-\bar{\model{}{}} }^2+ \sum_{i\in[n]}\sum_{j \neq i} \frac{s(s-1)}{n-1}\iprod{\model{i}{}-\bar{\model{}{}}}{\model{j}{}-\bar{\model{}{}}}\label{eq:inttwo}.
\end{align}
Also,

\begin{align}\nonumber
    \sum_{i\in [n]}\expect{\iprod{ (\model{i}{}-\bar{\model{}{}}) }{ \sum_{j \in \receivedsubset{i}{} } (\model{j}{}-\bar{\model{}{}}) }} &= \sum_{i\in [n]}\iprod{ (\model{i}{}-\bar{\model{}{}}) }{ \sum_{j \neq i } \expect{\mathcal{I}^{(i)}_j} (\model{j}{}-\bar{\model{}{}}) }\\
     &= \frac{\samplenum}{n-1} \sum_{i\in [n]}\sum_{j \neq i }\iprod{ \model{i}{}-\bar{\model{}{}} }{   \model{j}{}-\bar{\model{}{}} }\label{eq:intthree}
\end{align}
Combining~\eqref{eq:intone},~\eqref{eq:inttwo}, and~\eqref{eq:intthree} we obtain that
\begin{align}\label{notefive}
    \frac{1}{n} \sum_{i \in [n]} \expect{\norm{\modelp{i}{} -\bar{\model{}{}}}^2}&=\frac{1+\samplenum}{n(s+1)^2} \sum_{i \in [n]} {\norm{ \model{i}{}-\bar{\model{}{}}}^2}\\
    &+ \frac{1}{n(s+1)^2}\left(\frac{2\samplenum}{n-1}+\frac{s(s-1)}{n-1}\right) \sum_{i\in [n]}\sum_{j \neq i }\iprod{ \model{i}{}-\bar{\model{}{}} }{   \model{j}{}-\bar{\model{}{}} }.\nonumber
\end{align}
Also note that
\begin{align}\label{eq:notefour}
     \sum_{i\in [n]}\sum_{j \neq i }\iprod{ \model{i}{}-\bar{\model{}{}} }{   \model{j}{}-\bar{\model{}{}}} = \sum_{i\in [n]}\iprod{ \model{i}{}-\bar{\model{}{}} }{ \sum_{j \neq i }  \model{j}{}-\bar{\model{}{}}} = -\sum_{i\in [n]}\norm{\model{i}{}-\bar{\model{}{}}}^2.
\end{align}
Combining~\eqref{notefive} and~\eqref{eq:notefour}, we obtain that
\begin{align}
    \frac{1}{n} \sum_{i \in [n]} \expect{\norm{\modelp{i}{} -\bar{\model{}{}}}^2} =\frac{1}{s+1}\left(1-\frac{s}{n-1}\right) \frac{1}{n}\sum_{i \in [n]}\norm{\model{i}{}-\bar{\model{}{}}}^2
    \label{eq:improne}
\end{align}
Now note that as $\bar{\modelp{}{}}$ is the minimizer of  $g(z) := \frac{1}{n} \sum_{i \in [n]} \expect{\norm{\modelp{i}{} -z}^2}$, we have 
\begin{align}
\label{impotwo}
    \frac{1}{n} \sum_{i \in [n]} \expect{\norm{\modelp{i}{} -\bar{\modelp{}{}}}^2} \leq  \frac{1}{n} \sum_{i \in [n]} \expect{\norm{\modelp{i}{} -\bar{\model{}{}}}^2}.
\end{align}
Combining this with~\eqref{eq:improne}, and using Lemma~\ref{lem:diam_equal} yields
\begin{align*}
    \frac{1}{n^2} \sum_{i,j \in [n]} \expect{\norm{\modelp{i}{} -\modelp{j}{}}^2} \leq  \frac{1}{s+1}\left(1-\frac{s}{n-1}\right)\frac{1}{n^2} \sum_{i,j \in [n]} \expect{\norm{\model{i}{} -\model{j}{}}^2},
\end{align*}
which is the desired result.

Now, we consider \textbf{\ello}:

Recall that  in \ello, each node $i$, locally samples uniformly at random  $\samplenum$ other nodes and sends its model to them.
Denote by $A^{(i)}:=\card{\receivedsubset{i}{}}$ the number of nodes that send their vectors to node $i$, and $\mathcal{I}^{(i)}_j$ the indicator showing whether node $j$ sends its vector to node $i$ or not ($\mathcal{I}^{(i)}_j= 1$  if $j$ sends its model to $i$, and $\mathcal{I}^{(i)}_j= 0$ otherwise).  We then have
\begin{align*}
    A^{(i)} = \sum_{j \in [n] \setminus \{i\}} \mathcal{I}^{(i)}_j,
\end{align*}
and 
\begin{align*}
    \expect{A^{(i)}} = \samplenum.
\end{align*}

\textbf{Property (a):} \\
For any node $i$, we have 
\begin{align*}
    \expect{\modelp{i}{}} &= \expect{\frac{1}{A^{(i)} + 1} \left(\model{i}{} + \sum_{j \in [n] \setminus \{i\}} \mathcal{I}^{(i)}_j\model{j}{} \right)}\\
    &= \expect{\expect{\frac{1}{A^{(i)} + 1} \left(\model{i}{} + \sum_{j \in [n] \setminus \{i\}} \mathcal{I}^{(i)}_j\model{j}{} \right)|A^{(i)}}}\\
    &= \expect{\frac{1}{A^{(i)} + 1} \left(\model{i}{} + \sum_{j \in [n] \setminus \{i\}}\expect {\mathcal{I}^{(i)}_j|A^{(i)}}\model{j}{} \right)}\\
    &\overset{(a)}{=} \expect{\frac{1}{A^{(i)} + 1} \left(\model{i}{} + \frac{A^{(i)}}{n-1} \sum_{j \in [n] \setminus \{i\}}\model{j}{} \right)}\\
    &= \expect{\frac{1}{A^{(i)} + 1} \left(\model{i}{} + \frac{A^{(i)}}{n-1} (n\bar{\model{}{}} - \model{i}{}) \right)},
\end{align*}
where (a) uses the fact that $\expect {\mathcal{I}^{(i)}_j|A^{(i)}} = \frac{A^{(i)}}{n-1}$ as all other $n-1$ nodes have the same probability of sending their model to $i$.
Denoting $p = \expect{\frac{A^{(i)}}{A^{(i)} + 1}}$, we obtain that
\begin{align*}
    \expect{\modelp{i}{}} = \frac{pn}{n-1} \bar{\model{}{}} + \left(1 -  \frac{pn}{n-1} \right) \model{i}{}.
\end{align*}
Averaging over all $i \in [n]$ yields the result.

\textbf{Property (b):} \\
We have
\begin{align*}
    &\frac{1}{n} \sum_{i \in [n]} \expect{\norm{\modelp{i}{} -\bar{\model{}{}}}^2} = \frac{1}{n} \sum_{i \in [n]} \expect{\norm{\frac{1}{A^{(i)} + 1} \left(\model{i}{} + \sum_{j \in [n] \setminus \{i\}} \mathcal{I}^{(i)}_j\model{j}{} \right) -\bar{\model{}{}}}^2}\\ 
    &= \frac{1}{n} \sum_{i \in [n]} \expect{\expect{\norm{\frac{1}{A^{(i)} + 1} \left(\model{i}{} + \sum_{j \in [n] \setminus \{i\}} \mathcal{I}^{(i)}_j\model{j}{} \right) -\bar{\model{}{}}}^2|A^{(i)}}}\\ 
    &= \frac{1}{n} \sum_{i \in [n]} \expect{\expect{\norm{\frac{1}{A^{(i)} + 1} \left((\model{i}{}-\bar{\model{}{}}) + \sum_{j \in [n] \setminus \{i\}} \mathcal{I}^{(i)}_j(\model{j}{}-\bar{\model{}{}}) \right)}^2|A^{(i)}}}\\ 
    &= \frac{1}{n} \sum_{i \in [n]} \expect{\frac{1}{(A^{(i)} + 1)^2} \expect{\norm{\model{i}{}-\bar{\model{}{}}}^2 + \sum_{j \neq i} \mathcal{I}^{(i)}_j\norm{\model{j}{}-\bar{\model{}{}}}^2}|A^{(i)}} \\
    &+ \frac{1}{n} \sum_{i \in [n]} \expect{\frac{1}{(A^{(i)} + 1)^2} \expect{2\sum_{j \neq i} \mathcal{I}^{(i)}_j \iprod{\model{i}{}-\bar{\model{}{}}}{\model{j}{}-\bar{\model{}{}}} + \sum_{j \neq i} \sum_{k \neq i,k \neq j}\mathcal{I}^{(i)}_j \mathcal{I}^{(i)}_k \iprod{\model{j}{}-\bar{\model{}{}}}{\model{k}{}-\bar{\model{}{}}}}|A^{(i)}}
\end{align*}
Taking the expectation inside, we obtain that
\begin{align*}
    &\frac{1}{n} \sum_{i \in [n]} \expect{\norm{\modelp{i}{} -\bar{\model{}{}}}^2} = \frac{1}{n} \sum_{i \in [n]} \expect{\frac{1}{(A^{(i)} + 1)^2} \left(\norm{\model{i}{}-\bar{\model{}{}}}^2 + \sum_{j \neq i} \expect{\mathcal{I}^{(i)}_j|A^{(i)}}\norm{\model{j}{}-\bar{\model{}{}}}^2\right)} \\
    &+ \frac{1}{n} \sum_{i \in [n]} \expect{\frac{1}{(A^{(i)} + 1)^2} \left(2\sum_{j \neq i} \expect{\mathcal{I}^{(i)}_j|A^{(i)}} \iprod{\model{i}{}-\bar{\model{}{}}}{\model{j}{}-\bar{\model{}{}}}\right)}
    \\
    &+ \frac{1}{n} \sum_{i \in [n]} \expect{\frac{1}{(A^{(i)} + 1)^2} \left( 
    \sum_{j \neq i} \sum_{k \neq i,k \neq j}\expect{\mathcal{I}^{(i)}_j \mathcal{I}^{(i)}_k|A^{(i)}} \iprod{\model{j}{}-\bar{\model{}{}}}{\model{k}{}-\bar{\model{}{}}}\right)}.
\end{align*}
Now note that $\expect{\mathcal{I}^{(i)}_j|A^{(i)}} $ is the probability that $j$ selects $i$, given that in total $A^{(i)}$ nodes select $i$, thus
\begin{align*}
    \expect{\mathcal{I}^{(i)}_j|A^{(i)}} = \frac{A^{(i)}}{n-1}.
\end{align*}
Similarly $\mathcal{I}^{(i)}_j \mathcal{I}^{(i)}_k$ is only $1$ when both $j$
and $k$ select $i$, thus
\begin{align*}
    \expect{\mathcal{I}^{(i)}_j \mathcal{I}^{(i)}_k|A^{(i)}} = \frac{A^{(i)}(A^{(i)}-1)}{(n-1)(n-2)}
\end{align*}
Also, note that 
\begin{align*}
    \sum_{j \neq i}\iprod{\model{i}{}-\bar{\model{}{}}}{\model{j}{}-\bar{\model{}{}}} = \iprod{\model{i}{}-\bar{\model{}{}}}{\sum_{j \neq i}(\model{j}{}-\bar{\model{}{}})} = - \norm{\model{i}{}-\bar{\model{}{}}}^2,
\end{align*}
and
\begin{align*}
    \sum_{j \neq i} \sum_{k \neq i,k \neq j}\iprod{\model{j}{}-\bar{\model{}{}}}{\model{k}{}-\bar{\model{}{}}}
    &= \sum_{j \neq i} \iprod{\model{j}{}-\bar{\model{}{}}}{\sum_{k \neq i,k \neq j}(\model{k}{}-\bar{\model{}{}})}\\
    &=- \sum_{j \neq i} \iprod{\model{j}{}-\bar{\model{}{}}}{(\model{i}{}-\bar{\model{}{}}) + (\model{j}{}-\bar{\model{}{}})}\\
    &= \norm{\model{i}{}-\bar{\model{}{}}}^2 - \sum_{j \neq i} \norm{\model{j}{}-\bar{\model{}{}}}^2.
\end{align*}
Combining all yields
\begin{align*}
    &\frac{1}{n} \sum_{i \in [n]} \expect{\norm{\modelp{i}{} -\bar{\model{}{}}}^2} = \frac{1}{n} \sum_{i \in [n]} \expect{\frac{1}{(A^{(i)} + 1)^2} \left(\norm{\model{i}{}-\bar{\model{}{}}}^2 + \frac{A^{(i)}}{n-1}\sum_{j \neq i} \norm{\model{j}{}-\bar{\model{}{}}}^2\right)} \\
    &+ \frac{1}{n} \sum_{i \in [n]} \expect{\frac{1}{(A^{(i)} + 1)^2} \left(-\frac{2A^{(i)}}{n-1}\norm{\model{i}{}-\bar{\model{}{}}}^2\right)}
    \\
    &+ \frac{1}{n} \sum_{i \in [n]} \expect{\frac{1}{(A^{(i)} + 1)^2} \cdot \frac{A^{(i)}(A^{(i)}-1)}{(n-1)(n-2)} \left( 
    \norm{\model{i}{}-\bar{\model{}{}}}^2 - \sum_{j \neq i} \norm{\model{j}{}-\bar{\model{}{}}}^2\right)}\\
    &=  \frac{1}{n} \sum_{i \in [n]} \norm{\model{i}{}-\bar{\model{}{}}}^2 \expect{\frac{1}{(A^{(i)} + 1)^2} \left(1-\frac{2A^{(i)}}{n-1} + \frac{A^{(i)}(A^{(i)}-1)}{(n-1)(n-2)}\right)}\\
    &+ \frac{1}{n} \sum_{i \in [n]} \left( \expect{\frac{1}{(A^{(i)} + 1)^2} \left(\frac{A^{(i)}}{n-1} - \frac{A^{(i)}(A^{(i)}-1)}{(n-1)(n-2)}\right)} \sum_{j \neq i} \norm{\model{j}{}-\bar{\model{}{}}}^2 \right)
\end{align*}
Now note that by symmetry, the distribution of $A^{(i)}$ is the same as $A^{(j)}$ for any $i,j \in [n]$. Therefore,
\begin{align*}
    &\frac{1}{n} \sum_{i \in [n]} \expect{\norm{\modelp{i}{} -\bar{\model{}{}}}^2}\\ 
    &=  \frac{1}{n} \sum_{i \in [n]} \norm{\model{i}{}-\bar{\model{}{}}}^2 \expect{\frac{1}{(A^{(1)} + 1)^2} \left(1-\frac{2A^{(1)}}{n-1} + \frac{A^{(1)}(A^{(1)}-1)}{(n-1)(n-2)}+{A^{(1)}} - \frac{A^{(1)}(A^{(1)}-1)}{n-2}\right)}
\end{align*}
Now note that 
\begin{align*}
    1-\frac{2A^{(1)}}{n-1} + \frac{A^{(1)}(A^{(1)}-1)}{(n-1)(n-2)}+{A^{(1)}} - \frac{A^{(1)}(A^{(1)}-1)}{n-2} = 1 + {A^{(1)}} - \frac{{A^{(1)}}^2+A^{(1)}}{n-1} = (1 + {A^{(1)}})(1-\frac{A^{(1)}}{n-1}),
\end{align*}
thus
\begin{align*}
    &\frac{1}{n} \sum_{i \in [n]} \expect{\norm{\modelp{i}{} -\bar{\model{}{}}}^2} = \frac{1}{n} \sum_{i \in [n]} \norm{\model{i}{}-\bar{\model{}{}}}^2 \left(\expect{\frac{1}{A^{(1)} + 1}} - \frac{1}{n-1} \cdot \expect{\frac{A^{(1)}}{A^{(1)} + 1}}\right)
\end{align*}
Now note that as each node $j \neq 1$, independently and uniformly selects a set of $\samplenum$ nodes,  $A^{(1)}$ has a binomial distribution with parameters $n-1$, and $\frac{\samplenum}{n-1}$. Therefore, for $\samplenum>0$, we have
\begin{align*}
    \expect{\frac{1}{A^{(1)} + 1}} &= \sum_{k = 0}^{n-1} \frac{1}{k+1} {n-1 \choose k} \left( \frac{\samplenum}{n-1}\right)^k \left(1-\frac{\samplenum}{n-1} \right)^{n-1-k}\\ 
    &= \frac{n-1}{\samplenum n}  \sum_{k = 0}^{n-1} {n \choose k+1} \left( \frac{\samplenum}{n-1}\right)^{k+1} \left(1-\frac{\samplenum}{n-1} \right)^{n-1-k}\\ 
    &=  \frac{n-1}{\samplenum n} \left(1- \left(1 - \frac{\samplenum}{n-1} \right)^{n} \right).
\end{align*}
Also noting that $$\expect{\frac{A^{(1)}}{A^{(1)} + 1}} = 1- \expect{\frac{1}{A^{(1)} + 1}},$$
we obtain that

\begin{align}
\label{eq:impothree}
    &\frac{1}{n} \sum_{i \in [n]} \expect{\norm{\modelp{i}{} -\bar{\model{}{}}}^2} =  \left[\frac{1}{\samplenum}\left(1-\left(1-\frac{\samplenum}{n-1}\right)^n\right)-\frac{1}{n-1}\right]\frac{1}{n} \sum_{i \in [n]} \norm{\model{i}{}-\bar{\model{}{}}}^2.
\end{align}
Noting that 
$
    \frac{1}{n} \sum_{i \in [n]} \expect{\norm{\modelp{i}{} -\bar{\modelp{}{}}}^2} \leq  \frac{1}{n} \sum_{i \in [n]} \expect{\norm{\modelp{i}{} -\bar{\model{}{}}}^2}
$, from~\eqref{impotwo}, and using Lemma~\ref{lem:diam_equal}, we obtain that
\begin{align*}
    \frac{1}{n^2} \sum_{i,j \in [n]} \expect{\norm{\modelp{i}{} -\modelp{j}{}}^2} \leq  \left[\frac{1}{\samplenum}\left(1-\left(1-\frac{\samplenum}{n-1}\right)^n\right)-\frac{1}{n-1}\right]\frac{1}{n^2} \sum_{i,j \in [n]} \expect{\norm{\model{i}{} -\model{j}{}}^2},
\end{align*}
which is the desired result.

\textbf{Property (c):}\\
Note that 
\begin{align}\nonumber
    \expect{\norm{\bar{\modelp{}{}}-\bar{\model{}{}}}^2}  &= \expect{\norm{\frac{1}{n}\sum_{i \in [n]} \modelp{i}{}-\bar{\model{}{}}}^2}\\
    & = \frac{1}{n^2} \sum_{i \in [n]} \expect{\norm{ \modelp{i}{}-\bar{\model{}{}}}^2} + \frac{1}{n^2} \sum_{i \neq j} \expect{\iprod{\modelp{i}{}-\bar{\model{}{}}}{\modelp{j}{}-\bar{\model{}{}}}}. \label{eq:variance_y_bar}
\end{align}
Now recall that 
\begin{align*}
    \modelp{i}{}-\bar{\model{}{}} = \frac{1}{A^{(i)} + 1} \left((\model{i}{}-\bar{\model{}{}}) + \sum_{k \in [n] \setminus \{i\}} \mathcal{I}^{(i)}_k(\model{k}{}-\bar{\model{}{}}) \right).
\end{align*}
This implies that 
\begin{align}\nonumber
    &\frac{1}{n^2} \sum_{i \neq j} \expect{\iprod{\modelp{i}{}-\bar{\model{}{}}}{\modelp{j}{}-\bar{\model{}{}}}} =  \frac{1}{n^2} \sum_{i \in [n]} \sum_{j \neq i} \expect{\frac{1}{(A^{(i)} + 1)(A^{(j)} + 1)}}\iprod{\model{i}{}-\bar{\model{}{}}}{\model{j}{}-\bar{\model{}{}}}\\ \nonumber
    &+ \frac{2}{n^2}  \sum_{i \in [n]} \sum_{j \neq i} \sum_{k \neq i, k\neq j}  \expect{\frac{\mathcal{I}^{(i)}_k}{(A^{(i)} + 1)(A^{(j)} + 1)}}\iprod{\model{k}{}-\bar{\model{}{}}}{\model{j}{}-\bar{\model{}{}}}\\
    &+ \frac{2}{n^2}  \sum_{i \in [n]} \sum_{j \neq i} \sum_{k \neq i, k\neq j}   \sum_{l \neq i, l\neq j,l\neq k} \expect{\frac{\mathcal{I}^{(i)}_k\mathcal{I}^{(j)}_l}{(A^{(i)} + 1)(A^{(j)} + 1)}}\iprod{\model{k}{}-\bar{\model{}{}}}{\model{l}{}-\bar{\model{}{}}}\label{eq:big_messy_eq}
\end{align}
Now note that by symmetry, for any $i,j \in [n]$, we have
\begin{align*}
    \expect{\frac{1}{(A^{(i)} + 1)(A^{(j)} + 1)}} = \expect{\frac{1}{(A^{(1)} + 1)(A^{(2)} + 1)}}.
\end{align*}
Similarly,
\begin{align*}
    \expect{\frac{\mathcal{I}^{(i)}_k}{(A^{(i)} + 1)(A^{(j)} + 1)}} = \expect{\frac{\mathcal{I}^{(1)}_3}{(A^{(1)} + 1)(A^{(2)} + 1)}},
\end{align*}
and 
\begin{align*}
    \expect{\frac{\mathcal{I}^{(i)}_k\mathcal{I}^{(j)}_l}{(A^{(i)} + 1)(A^{(j)} + 1)}} = \expect{\frac{\mathcal{I}^{(1)}_3\mathcal{I}^{(2)}_4}{(A^{(1)} + 1)(A^{(2)} + 1)}}.
\end{align*}
This implies that all three terms in~\eqref{eq:big_messy_eq} can be written as 
\begin{align*}
     c \sum_{i \in [n]}\sum_{j \neq i} \iprod{\model{i}{}-\bar{\model{}{}}}{\model{j}{}-\bar{\model{}{}}}, 
\end{align*}
where $c$ is a positive constant.
We also have
\begin{align*}
    \sum_{i \in [n]}\sum_{j \neq i} \iprod{\model{i}{}-\bar{\model{}{}}}{\model{j}{}-\bar{\model{}{}}} &= \sum_{i \in [n]} \iprod{\model{i}{}-\bar{\model{}{}}}{\sum_{j \neq i}(\model{j}{}-\bar{\model{}{}})}\\
    &= - \sum_{i \in [n]} \norm{\model{i}{}-\bar{\model{}{}}}^2.
\end{align*}
Therefore all the terms in~\eqref{eq:big_messy_eq} are non-positive. Combining this with~\eqref{eq:variance_y_bar}, we obtain that
\begin{align*}
    \expect{\norm{\bar{\modelp{}{}}-\bar{\model{}{}}}^2}  &\leq \frac{1}{n^2} \sum_{i \in [n]} \expect{\norm{ \modelp{i}{}-\bar{\model{}{}}}^2} \\
    &\leq \frac{\contraction{\samplenum}}{n} \cdot  \frac{1}{n} \sum_{i \in [n]} \norm{\model{i}{} -\bar{\model{}{}}}^2,
\end{align*}
where the second inequality uses~\eqref{eq:impothree}. Combining this with Lemma~\ref{lem:diam_equal} then concludes the proof.
\end{proof}
\begin{remark}
\label{rem:triv}
Note that $\contraction{\samplenum}$ computed in Lemma~\ref{lem:contraction} is decreasing in $\samplenum$ and increasing in $n$, therefore, for any $\samplenum \geq 1$, and $n \geq 2$, we have
\begin{align*}
    \contraction{\samplenum}\Big|_{n < \infty} &\leq \lim_{n \rightarrow \infty} \contraction{1}\\
    &=
\lim_{n\to\infty}{\left(1-\left(1-\frac{1}{n-1}\right)^{n}-\frac{1}{n-1}\right)}\\
&= 1 - \frac{1}{e}, 
\end{align*}
where $e$ is Euler's Number and we used the fact that $\lim_{n\to\infty}{\left(1-\frac{1}{n}\right)^{n}}
= \frac{1}{e}$. Similarly,
for any $\samplenum \geq 1$, and $n \geq 2$, we have
\begin{align*}
    \contractionp{\samplenum}\Big|_{n < \infty} &\leq \lim_{n \rightarrow \infty} \contractionp{1}= \frac{1}{2} < 1 - \frac{1}{e}.
\end{align*}
\end{remark}

\subsection{Proof of Lemma~\ref{lem:SGD:drift}}
\lemSGDdrift*

\begin{proof}

First note that for any $i \in [n]$, we have 
\begin{align*}
    \gradient{i}{t}-\gradient{j}{t}  &= \gradient{i}{t} - \localgrad{i}\left( \model{i}{t}\right) + \localgrad{i}\left( \model{i}{t}\right) - \localgrad{i}\left( \avgmodel{t}\right) + \localgrad{i}\left( \avgmodel{t}\right) \\ &- 
    \localgrad{j}\left( \avgmodel{t}\right) +
    \localgrad{j}\left( \avgmodel{t}\right) -
    \localgrad{j}\left( \model{j}{t}\right) +
    \localgrad{j}\left( \model{j}{t}\right) -\gradient{j}{t}
\end{align*}
Thus, using Lemma~\ref{lem:jensen_prime}, we have
\begin{align*}
    {\norm{ \gradient{i}{t}-\gradient{j}{t}}^2} &\leq 5{\norm{\gradient{i}{t} - \localgrad{i}\left( \model{i}{t}\right)}^2}+5{\norm{\localgrad{i}\left( \model{i}{t}\right) - \localgrad{i}\left( \avgmodel{t}\right)}^2}\\ 
    &+5{\norm{\localgrad{j}\left( \model{j}{t}\right) - \localgrad{j}\left( \avgmodel{t}\right)}^2}
    +5{\norm{\gradient{j}{t} - \localgrad{j}\left( \model{j}{t}\right)}^2}\\
    &+5{\norm{\localgrad{i}\left( \avgmodel{t}\right) - \localgrad{j}\left( \avgmodel{t}\right)}^2}.
\end{align*}
Taking the conditional expectation, we have
\begin{align}\nonumber
    \condexpect{t}{\norm{ \gradient{i}{t}-\gradient{j}{t}}^2} &\leq 5\condexpect{t}{\norm{\gradient{i}{t} - \localgrad{i}\left( \model{i}{t}\right)}^2}+5\condexpect{t}{\norm{\localgrad{i}\left( \model{i}{t}\right) - \localgrad{i}\left( \avgmodel{t}\right)}^2}\\ \nonumber
    &+5\condexpect{t}{\norm{\localgrad{j}\left( \model{j}{t}\right) - \localgrad{j}\left( \avgmodel{t}\right)}^2}
    +5\condexpect{t}{\norm{\gradient{j}{t} - \localgrad{j}\left( \model{j}{t}\right)}^2}\\
    &+5\condexpect{t}{\norm{\localgrad{i}\left( \avgmodel{t}\right) - \localgrad{j}\left( \avgmodel{t}\right)}^2}\label{eq:temp1}
\end{align}
Now by Assumption~\ref{ass:stochastic_noise}, we have
\begin{align}
\label{eq:temp2}
    \condexpect{t}{\norm{\gradient{i}{t} - \localgrad{i}\left( \model{i}{t}\right)}^2} \leq \sigma^2.
\end{align}
By Assumption~\ref{ass:smoothness}, we have
\begin{align}
\label{eq:temp3}
    \condexpect{t}{\norm{\localgrad{i}\left( \model{i}{t}\right) - \localgrad{i}\left( \avgmodel{t}\right)}^2} \leq L^2 \condexpect{t}{\norm{\model{i}{t} - \avgmodel{t}}^2}.
\end{align}
Thus, by Assumption~\ref{ass:bounded_heterogeneity}, and Lemma~\ref{lem:diam_equal}, we obtain that
\begin{align}
    \label{eq:temp4}
    \frac{1}{n^2} \sum_{i,j\in[n]} \condexpect{t}{\norm{\localgrad{i}\left( \avgmodel{t}\right) - \localgrad{j}\left( \avgmodel{t}\right)}^2} \leq 2 \heterparam^2.
\end{align}
Combining~\eqref{eq:temp1},~\eqref{eq:temp2},~\eqref{eq:temp3}, and~\eqref{eq:temp4}, and taking total expectation from both sides, we obtain that
\begin{align}
\label{eq:q_boundi}
    &\frac{1}{n^2}\sum_{i,j \in [n]}\expect{\norm{ \gradient{i}{t}-\gradient{j}{t}}^2} \leq \frac{10L^2 }{n} \sum_{i \in [n]}\expect{\norm{\model{i}{t} - \avgmodel{t}}^2} + 10\sigma^2 + 10\heterparam^2.
\end{align}
Now  Lemma~\ref{lem:diam_equal} yields
\begin{align}\label{eq:g_boud}
     &\frac{1}{n^2}\sum_{i,j \in [n]}\expect{\norm{ \gradient{i}{t}-\gradient{j}{t}}^2} \leq \frac{5L^2 }{n^2} \sum_{i,j \in [n]}\expect{\norm{\model{i}{t} - \model{j}{t}}^2} + 10\sigma^2 + 10\heterparam^2.
\end{align}

We now analyze  $\frac{1}{n^2} \sum_{i,j \in [n]} \expect{\norm{\model{i}{t} - \model{j}{t}}^2}$. From Algorithm \ref{algo}, recall that for all $i \in [n]$, we have $\model{i}{t+\nicefrac{1}{2}} = \model{i}{t} - \localstep \gradient{i}{t}$. We obtain for all $i, \, j \in [n]$, that
\begin{align}\label{eq:distanc_models_}
    \expect{\norm{\model{i}{t+\nicefrac{1}{2}} - \model{j}{t+\nicefrac{1}{2}} }^2} & \leq \expect{\norm{\model{i}{t} - \model{j}{t} - \localstep \left( \gradient{i}{t} - \gradient{j}{t} \right)}^2} \\
    & \leq (1 + c) \expect{\norm{\model{i}{t} - \model{j}{t}}^2} + \left(1 + \frac{1}{c} \right) \localstep^2 \expect{\norm{\gradient{i}{t} - \gradient{j}{t}}^2},\nonumber
\end{align}
where we used the fact that $(x + y)^2 \leq (1 + c) x^2 + (1 + \nicefrac{1}{c}) y^2$ for any $c > 0$.
Now recall that $\contractionb{\samplenum} = \contractionp{\samplenum}$ for \elo, and  $\contractionb{\samplenum} = \contraction{\samplenum}$ for~\ello. Therefore,  by Property $(b)$ of Lemma~\ref{lem:contraction}, for both variant of the algorithm, we have
\begin{align*}
    \frac{1}{n} \sum_{i \in [n]} \expect{\norm{\model{i}{t + 1} -\avgmodel{t+1}}^2}  \leq \frac{\contractionb{\samplenum}}{n} \sum_{i \in [n]} \expect{\norm{\model{i}{t + \nicefrac{1}{2}} -\avgmodel{t+\nicefrac{1}{2}}}^2} 
\end{align*}
Then, applying Lemma~\ref{lem:diam_equal}, we have
\begin{align*}
    \frac{1}{n^2}\sum_{i,j\in[n]}\expect{\norm{\model{i}{t + 1} -\model{j}{t + 1}}^2} \leq  \frac{\contractionb{\samplenum}}{n^2}\sum_{i,j\in[n]}\expect{\norm{\model{i}{t + \nicefrac{1}{2}} -\model{j}{t + \nicefrac{1}{2}}}^2}.
\end{align*}
Combining this with~\eqref{eq:distanc_models_}, we obtain that 
\begin{align*}
    \frac{1}{n^2}\sum_{i,j\in[n]}\expect{\norm{\model{i}{t+1} - \model{j}{t+1} }^2} 
    & \leq (1 + c) \contractionb{\samplenum} \frac{1}{n^2}\sum_{i,j\in[n]} \expect{\norm{\model{i}{t} - \model{j}{t}}^2}\\ &+ \left(1 + \frac{1}{c} \right) \contractionb{\samplenum} \localstep^2 \frac{1}{n^2}\sum_{i,j\in[n]}\expect{\norm{\gradient{i}{t} - \gradient{j}{t}}^2}.
\end{align*}

  For $c = \frac{1-\contractionb{\samplenum}}{4\contractionb{\samplenum}}$, we obtain that
\begin{align*}
    \frac{1}{n^2}\sum_{i,j\in[n]}\expect{\norm{\model{i}{t+1} - \model{j}{t+1} }^2} 
    & \leq \frac{1+3\contractionb{\samplenum}}{4} \frac{1}{n^2}\sum_{i,j\in[n]} \expect{\norm{\model{i}{t} - \model{j}{t}}^2}\\ &+ \frac{1+3\contractionb{\samplenum}}{1-\contractionb{\samplenum}} \contractionb{\samplenum} \localstep^2 \frac{1}{n^2}\sum_{i,j\in[n]}\expect{\norm{\gradient{i}{t} - \gradient{j}{t}}^2}.
\end{align*}
Combining this with~\eqref{eq:g_boud}, we obtain that 
\begin{align*}
    \frac{1}{n^2}\sum_{i,j\in[n]}\expect{\norm{\model{i}{t+1} - \model{j}{t+1} }^2} 
    & \leq \frac{1+3\contractionb{\samplenum}}{4} \frac{1}{n^2}\sum_{i,j\in[n]} \expect{\norm{\model{i}{t} - \model{j}{t}}^2}\\ &+ \frac{1+3\contractionb{\samplenum}}{1-\contractionb{\samplenum}} \contractionb{\samplenum} \localstep^2 \left( \frac{5L^2 }{n^2} \sum_{i,j \in [n]}\expect{\norm{\model{i}{t} - \model{j}{t}}^2} + 10\sigma^2 + 10\heterparam^2\right)\\
    &= \left(\frac{1+3\contractionb{\samplenum}}{4} + 5\frac{1+3\contractionb{\samplenum}}{1-\contractionb{\samplenum}}  \contractionb{\samplenum} \localstep^2L^2\right)\frac{1}{n^2}\sum_{i,j\in[n]} \expect{\norm{\model{i}{t} - \model{j}{t}}^2}\\
    &+\frac{1+3\contractionb{\samplenum}}{1-\contractionb{\samplenum}} \contractionb{\samplenum}\localstep^2 \left( 10\sigma^2 + 10\heterparam^2\right).
\end{align*}

Now note that for by Remark~\ref{rem:triv}, for both variants of the algorithm, we have $\contractionb{\samplenum}\leq 1 - \frac{1}{e}$, which implies that 
 $$\localstep^2\leq \frac{1}{(20L)^2}  \leq \frac{(1-\contractionb{\samplenum})^2}{20\contractionb{\samplenum}(1+3\contractionb{\samplenum})L^2}.$$ Therefore,
\begin{align*}
    \frac{1}{n^2}\sum_{i,j\in[n]}\expect{\norm{\model{i}{t+1} - \model{j}{t+1} }^2} 
    &  \leq \frac{1+\contractionb{\samplenum}}{2}\frac{1}{n^2}\sum_{i,j\in[n]} \expect{\norm{\model{i}{t} - \model{j}{t}}^2}+\frac{1+3\contractionb{\samplenum}}{1-\contractionb{\samplenum}} \contractionb{\samplenum}\localstep^2 \left( 10\sigma^2 + 10\heterparam^2\right).
\end{align*}
Unrolling the recursion, we obtain that

\begin{align*}
    \frac{1}{n^2}\sum_{i,j\in[n]}\expect{\norm{\model{i}{t} - \model{j}{t} }^2} 
    & \leq 20\frac{1+3\contractionb{\samplenum}}{(1-\contractionb{\samplenum})^2} \contractionb{\samplenum}\localstep^2 \left( \sigma^2 + \heterparam^2\right).
\end{align*}
Combining this with~\eqref{eq:q_boundi}, we obtain that 
\begin{align*}
    \frac{1}{n^2}\sum_{i,j \in [n]}\expect{\norm{ \gradient{i}{t}-\gradient{j}{t}}^2}\leq 15\left(   \sigma^2 +  \heterparam^2 \right).
\end{align*}
This is the desired result.
\end{proof}

\subsection{Proof of Lemma~\ref{lem:growth_SGD}}
\lemgrowthSGD*

\begin{proof}
\newcommand{\bias}[1]{R_{#1}}
Consider an arbitrary $t \in [T]$.
Note that, we do not necessarily have 
$\avgmodel{t + 1} = \avgmodel{t + \nicefrac{1}{2}}$. But by Lemma~\ref{lem:contraction}, we have $\expect{\avgmodel{t + 1}} = \expect{\avgmodel{t + \nicefrac{1}{2}}}$. Now 
by the smoothness of the loss function (Assumption~\ref{ass:smoothness}), we have 
\begin{align*}
    \loss(\avgmodel{t+1})- \loss(\avgmodel{t}) &\leq \iprod{\avgmodel{t+1} - \avgmodel{t}}{\nabla \loss\left( \avgmodel{t} \right)} + \frac{L}{2} \norm{\avgmodel{t+1} - \avgmodel{t}}^2 \nonumber \\
    & = \iprod{\avgmodel{t+1} -\avgmodel{t+\nicefrac{1}{2}} + \avgmodel{t+\nicefrac{1}{2}} - \avgmodel{t}}{ \nabla \loss \left( \avgmodel{t} \right)} + \frac{L}{2} \norm{\avgmodel{t+1} -\avgmodel{t+\nicefrac{1}{2}} + \avgmodel{t+\nicefrac{1}{2}} - \avgmodel{t}}^2.
\end{align*}
Now denoting $\AvgGrad{t} = \frac{1}{n} \sum_{i \in [n]} \localgrad{i}\left(\model{i}{t}\right)$, $\avggrad{t} = \frac{1}{n} \sum_{i \in [n]} \gradient{i}{t}$ and taking the conditional expectation, we have
\begin{align}\nonumber
    \condexpect{t}{\loss(\avgmodel{t+1})}- \loss(\avgmodel{t}) &\leq  \iprod{\condexpect{t}{\avgmodel{t+\nicefrac{1}{2}} - \avgmodel{t}}} { \nabla \loss \left( \avgmodel{t} \right)}  + \frac{L}{2} \condexpect{t}{\norm{\avgmodel{t+1} -\avgmodel{t+\nicefrac{1}{2}} + \avgmodel{t+\nicefrac{1}{2}} - \avgmodel{t}}^2} \\ \nonumber
    &\overset{(a)}{\leq} - \localstep \iprod{ \AvgGrad{t}} { \nabla \loss \left( \avgmodel{t} \right)}  + {L\localstep^2} \condexpect{t}{\norm{\avggrad{t}}^2} + {L} \condexpect{t}{\norm{\avgmodel{t+1} -\avgmodel{t+\nicefrac{1}{2}}}^2} \\
    &\overset{(b)}{\leq} - \localstep \iprod{ \AvgGrad{t}} { \nabla \loss \left( \avgmodel{t} \right)}  + {L\localstep^2} \condexpect{t}{\norm{ \AvgGrad{t}}^2} +  {L\localstep^2 \frac{\sigma^2}{n}} + {{L} \condexpect{t}{\norm{\avgmodel{t+1} -\avgmodel{t+\nicefrac{1}{2}}}^2}} \label{eq:groth_first}.
\end{align}
where  (a) uses Young's inequality and (b) is based on the facts that by Assumption~\ref{ass:stochastic_noise}, we have  $\condexpect{t}{\avggrad{t}} = \AvgGrad{t}$, and $\condexpect{t}{\norm{\avggrad{t}- \AvgGrad{t}}^2}  \leq \frac{\sigma^2}{n}$.
Now note that 
using the fact that $\localstep \leq \frac{1}{2L}$, we have
\begin{align}\nonumber
    - \localstep \iprod{ \AvgGrad{t}} { \nabla \loss \left( \avgmodel{t} \right)}  + {L\localstep^2} {\norm{ \AvgGrad{t}}^2}
    &\leq \frac{\localstep}{2} \left(-2  \iprod{ \AvgGrad{t}} { \nabla \loss \left( \avgmodel{t} \right)}  +{\norm{ \AvgGrad{t}}^2} \right)\\
    &= \frac{\localstep}{2} \left(  -\norm{\nabla \loss \left( \avgmodel{t} \right)}^2 + \norm{\nabla \loss \left( \avgmodel{t} \right) -  \AvgGrad{t}}^2 \right).
    \label{eq:groth_middle}
\end{align}
Combining  \eqref{eq:groth_first}, and \eqref{eq:groth_middle}, we obtain that
\begin{align*}
     \condexpect{t}{\loss(\avgmodel{t+1})}- \loss(\avgmodel{t}) &\leq -\frac{\localstep}{2} \norm{\nabla \loss \left( \avgmodel{t} \right)}^2 + \frac{\localstep}{2} \norm{\nabla \loss \left( \avgmodel{t} \right) -  \AvgGrad{t}}^2 + {L\localstep^2 \frac{\sigma^2}{n}} + {L\condexpect{t}{\norm{\avgmodel{t+1} -\avgmodel{t+\nicefrac{1}{2}}}^2}}.
\end{align*}
Taking total expectation, we obtain that
\begin{align}\nonumber
     \expect{\loss(\avgmodel{t+1})- \loss(\avgmodel{t})} &\leq -\frac{\localstep}{2} \expect{\norm{\nabla \loss \left( \avgmodel{t} \right)}^2} + \frac{\localstep}{2} \expect{\norm{\nabla \loss \left( \avgmodel{t} \right) -  \AvgGrad{t}}^2}\\ 
     & + {L\localstep^2 \frac{\sigma^2}{n}} + {L\expect{\norm{\avgmodel{t+1} -\avgmodel{t+\nicefrac{1}{2}}}^2}}.\label{eq:desenet_bound}
\end{align}

Now, note that 
\begin{align}\nonumber
    \expect{\norm{\AvgGrad{t}-\nabla \loss \left( \avgmodel{t} \right)}^2} &= \expect{\norm{\frac{1}{n} \sum_{i \in [n]} \localgrad{i}(\model{i}{t}) - \frac{1}{n} \sum_{i \in [n]} \localgrad{i}\left(\avgmodel{t}\right)}^2}\\ \nonumber
    &= \expect{\norm{\frac{1}{n} \sum_{i \in [n]} \left(\localgrad{i}(\model{i}{t}) - \localgrad{i}\left(\avgmodel{t}\right)\right)}^2} \\ \nonumber
    &\leq \frac{1}{n} \sum_{i \in [n]} \expect{\norm{\localgrad{i}(\model{i}{t}) - \localgrad{i}\left(\avgmodel{t}\right)}^2}\\
    &\overset{(a)}{\leq} \frac{L^2}{n} \sum_{i \in [n]} \expect{\norm{\model{i}{t} - \avgmodel{t}}^2}.\nonumber\\
    &\overset{(b)}{\leq} \frac{L^2}{2n^2} \sum_{i,j \in [n]} \expect{\norm{\model{i}{t} - \model{j}{t}}^2},\nonumber
\end{align}
where (a) uses Assumption~\ref{ass:smoothness}, in (b), we used Lemma~\ref{lem:diam_equal}.
Combining this with~\eqref{eq:desenet_bound}, we obtain that
\begin{align}\nonumber
     \expect{\loss(\avgmodel{t+1})- \loss(\avgmodel{t})} &\leq -\frac{\localstep}{2} \expect{\norm{\nabla \loss \left( \avgmodel{t} \right)}^2} + \frac{\localstep}{2} \frac{L^2}{2n^2} \sum_{i,j \in [n]} \expect{\norm{\model{i}{t} - \model{j}{t}}^2}\\ 
     & + {L\localstep^2 \frac{\sigma^2}{n}} + {L\expect{\norm{\avgmodel{t+1} -\avgmodel{t+\nicefrac{1}{2}}}^2}}.\nonumber
\end{align}
Rearranging the terms we have
\begin{align}\nonumber
     \expect{\norm{\nabla \loss \left( \avgmodel{t} \right)}^2}&\leq \frac{2}{\localstep}\expect{\loss(\avgmodel{t})- \loss(\avgmodel{t+1})} +\frac{L^2}{2n^2} \sum_{i,j \in [n]} \expect{\norm{\model{i}{t} - \model{j}{t}}^2}\\ 
     & + {2L\localstep \frac{\sigma^2}{n}} + \frac{2L}{\localstep}{\expect{\norm{\avgmodel{t+1} -\avgmodel{t+\nicefrac{1}{2}}}^2}}.\nonumber
\end{align}
which is the lemma.

\end{proof}

\section{Experimental Details and Further Evaluation}
\label{app:exp}

\begin{table*}[t!]
	\small
	\centering
	\caption{\label{tab:datasets}Summary of learning parameters used for evaluation. For each dataset, batch size (b), and the number of training steps per communication round (r) are presented.} 
		\begin{tabular}{ c c c r r r r }
			\toprule
			\multirow{2}{*}{\textsc{Task}} & \multirow{2}{*}{\textsc{Dataset}} & \multirow{2}{*}{\textsc{Model}} & \multirow{2}{*}{b} & \multirow{2}{*}{r} & \textsc{Training} & \textsc{Total} \\
			& & & & &\textsc{Samples} & \textsc{Parameters} \\ 
			\toprule
            Image Classification & \cifar & GN-LeNet & \num{8} & \num{3} & \num{50000} & \num{89834} \\
            Image Classification & \femnist & CNN & \num{16} & \num{7} & \num{734463} & \num{1690046} \\
			\bottomrule
	\end{tabular}%
	\vspace{1mm}
\end{table*}

\subsection{Experiment setup and learning rate tuning}
\label{sec:exp_setup}

\begin{figure*}[t!]
	\centering
	\inputplot{plots/lrtuning.tex}{4}
	\caption{The test accuracy for \elo, \texttt{7-Regular static} topologies, and \texttt{8-U-EquiStatic} topologies with different learning rates $ \gamma $, with the \cifar dataset.}
	\label{fig:lrtuning}
\end{figure*}

\Cref{tab:datasets} provides a summary of the learning parameters, model, and dataset used in the experiments.
The models were evaluated on samples from the test set that were unseen during training.
The step size ($\gamma$) was tuned by running each baseline on a range of values and taking the setup with the best validation accuracy.
The plots in~\cref{fig:lrtuning} show the \cifar convergence plots for \elo, \texttt{7-Regular static}, and \texttt{8-U-EquiStatic} (most prominent topologies) over different step sizes.
The optimal step-sizes are $\gamma = 0.1$ for \texttt{Fully connected} and \texttt{8-U-EquiStatic}, and $\gamma = 0.05$ for the remaining algorithms and topologies over \cifar.
For \femnist, the optimal step size is $\gamma = 0.1$ for all algorithms and topologies.
All experiments in \Cref{sec:eval} were conducted for a total of \num{3000} communication rounds and the experiments for homogeneous data distributions of \cifar (\Cref{fig:eliid}) were conducted for a total of \num{1000} communication rounds.
The experiments over the \femnist dataset were conducted for a total of \num{1440} communication rounds.

\subsection{Computational resources}
\label{app:compute}
We perform experiments on 6 hyperthreading-enabled machines with dual Intel(R) Xeon(R) CPU E5-2630 v3 @ 2.40GHz of 8 cores.
All algorithms were implemented as part of Decentralizepy~\cite{dhasade:2023:dcpy}.
Each DL node was assigned 2 virtual cores.
An experiment took at most 3 hours in wall-clock time for experiments in \Cref{sec:eval}, 2 hours in wall-clock time for \femnist experiments, and 1 hour in wall-clock time for \cifar experiments in \Cref{app:exp}, summing to a total of 1152 virtual CPU hours.
Across all experiments that are presented in this article, including different seeds and learning rate tuning, the total virtual CPU time goes up to approximately \num{54700} hours.

\begin{figure}[tb!]
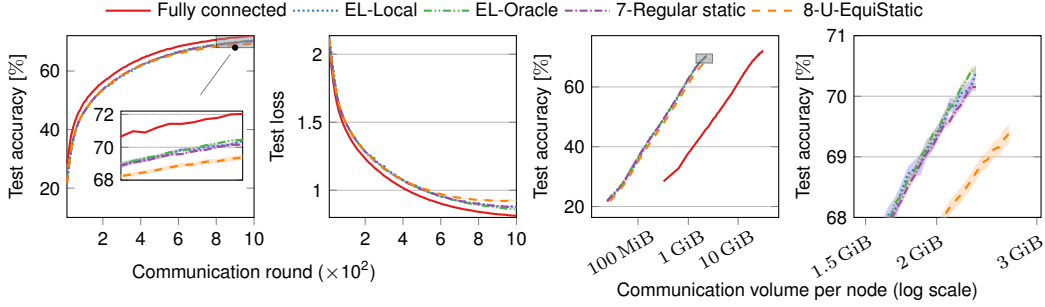

	\centering
	\inputplot{plots/eliid}{5}
	\caption{Convergence and communication usage results for homogeneous data partitioning (Dirichlet distribution $\alpha=1.0$) of the \cifar dataset.}
	\label{fig:eliid}
\end{figure}

\subsection{Experiments with homogeneous data distributions}
In this experiment, we focused on evaluating the performance of different communication topologies under \ac{IID} data. The data was generated using a Dirichlet distribution with $\alpha=1.0$.
As a reference baseline, we established a \texttt{Fully connected} topology.
Additionally, we examined \ac{EL} (\elo and \ello) in comparison to \texttt{7-Regular static} and \texttt{8-U-EquiStatic}.

We observed that the results were not particularly exciting (\Cref{fig:eliid}), primarily due to the nature of the data distribution, which leaves limited room for improvement across the different topologies.
In terms of accuracy, the \texttt{Fully connected} topology achieved roughly \SI{2}{\%} better accuracy than the sparser topologies.
\elo displayed a marginal advantage over the other approaches. 
However, the convergence behavior for \ello, \texttt{7-Regular static}, and \elo was nearly overlapping, both in terms of the number of rounds and communication usage.
On the other hand, the \texttt{8-U-EquiStatic} approach lagged behind, exhibiting \SI{1}{\%} lower accuracy.

Although significant improvements were not observed in \ac{IID}  data settings, it is worth noting that \ac{EL} did not hinder convergence. In more realistic \ac{non-IID} data distributions, \ac{EL} demonstrated improved accuracy and reduced the network utilization, as shown in \Cref{fig:elcomparison}, \Cref{sec:eval}.

\subsection{Performance of \ac{EL} and baselines on \femnist}
\label{sec:exp_femnist}

\begin{figure}[tb!]
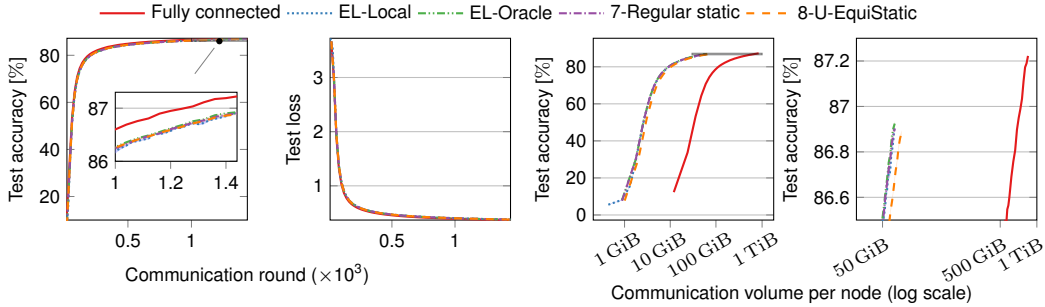

	\vspace{-0.3cm}
	\centering
	\inputplot{plots/elcomparison-femnist}{6}
	\caption{Communication rounds vs. top-1 test accuracy and (left) and communication volume per node vs. test accuracy (right) \newtext{for the \femnist dataset}.}
	\label{fig:elcomparison:femnist}
	\vspace{-0.3cm}
\end{figure}

As an additional dataset, we employed \femnist~\cite{caldas2018leaf} to evaluate \ac{EL} against \dpsgd over other topologies.
\Cref{fig:elcomparison:femnist} shows the convergence plots for the baselines against both \elo and \ello.
We observe the same trend with \texttt{Fully connected} topology performing the best in terms of achieved top-1 accuracy.
\elo outperforms \ello and both achieve higher top-1 accuracy compared to the static topologies \texttt{7-Regular static} and \texttt{8-U-EquiStatic}.
The improvement in accuracy with \ac{EL} (\SI{0.2}{\%} between \elo and \texttt{7-Regular static}) does not look as evident as with the \cifar dataset because the room for improvement (difference between \texttt{7-Regular static} and \texttt{Fully connected}) is quite small.
This can be attributed to the homogeneity of the \femnist dataset and the complexity of the digit recognition task.

\newtext{
\section{Notes on Network Connectivity and \ac{EL} Performance}
\label{sec:network_connectivity}
The implementation of our decentralized scheme is built around the condition that all nodes can communicate with each other.
This is similar to the assumptions of the EquiTopo topologies, a competitor baseline~\cite{song2022communicationefficient}.
However, we argue that the connectivity requirement is a bit more lenient, allowing our \ac{EL} approach to function in a wide range of practical scenarios.
In data center settings, it is common to train on clusters of highly interconnected GPUs, and all-to-all communication should be achievable in these settings.
In edge settings, \eg, a network of mobile devices collaboratively training a model while keeping private datasets, the communication barrier might appear more substantial.
Nonetheless, Internet networks are generally well-connected, which mitigates this concern.
More importantly, from a practical point of view, even if pairwise communications encounter some barriers, the decentralized and randomized nature of \elo and \ello should still allow for effective model learning and convergence.
The occasional lack of communication between specific nodes should not significantly impact the algorithm's performance, as model updates are still propagated through other communicating nodes, as long as the network is not partitioned.

\ac{EL} is most useful in scenarios where every pair of nodes can communicate, but the total communication budget is limited.
Our randomized communication scheme allows for efficient use of the limited resources while ensuring faster model convergence than conventional decentralized learning approaches.}

\end{document}